\newif\ifarXiv
\arXivtrue

\newif\ifanonymous
\anonymoustrue

\ifarXiv
    \anonymousfalse
\fi

\RequirePackage{fix-cm}
\documentclass{article}

\ifarXiv
    \usepackage{natbib}
    \usepackage[margin=2cm]{geometry}
\else
    \usepackage{iclr2027_conference,times}
\fi

\usepackage[T1]{fontenc}
\usepackage[utf8]{inputenc}
\usepackage{microtype}
\usepackage{amsmath,amssymb,amsthm,mathtools}
\usepackage{booktabs,tabularx,array,multirow}
\usepackage{graphicx}
\usepackage{subcaption}
\usepackage{xcolor}
\usepackage{enumitem}
\usepackage{verbatim}
\usepackage[hidelinks]{hyperref}
\usepackage[nameinlink,noabbrev,capitalize]{cleveref}
\usepackage{float}
\usepackage{placeins}

\usepackage{caption}
\usepackage{siunitx}
\usepackage{longtable}
\usepackage{pdflscape}
\usepackage{tikz}
\usetikzlibrary{arrows.meta,calc,positioning}
\usepackage{todonotes}
\setuptodonotes{inline, color=yellow!80!red}
\usepackage[most]{tcolorbox}
\usepackage{silence}
\definecolor{takeawayline}{HTML}{009E73}
\colorlet{takeawaybg}{takeawayline!10!white}

\newtcolorbox{takeaway}{
  colback=takeawaybg,
  colframe=takeawayline,
  boxrule=0.5mm,
  arc=1mm,
  outer arc=1.5mm,
  left=5pt,
  right=5pt,
  top=5pt,
  bottom=5pt,
  boxsep=0pt,
  before skip=6pt,
  after skip=8pt,
  fontupper=\normalsize
}

\newcommand{\softmax}{\operatorname{softmax}}

\newtheorem{proposition}{Proposition}
\newtheorem{lemma}{Lemma}

\title{Propagate, Then Sharpen: Post-Hoc Refinement of Frozen Node Classifiers}

\ifanonymous
    \author{Anonymous Authors}
\else
    \author{
        Preben Johnsen Bentdal \\
        Department of Informatics, University of Bergen \\
        Norwegian Research Centre (NORCE) \\
        \texttt{pbe027@uib.no}
        \and
        Nello Blaser \\
        Department of Informatics, University of Bergen \\
        \texttt{nello.blaser@uib.no}
        \and
        Xue-Cheng Tai \\
        Norwegian Research Centre (NORCE) \\
        \texttt{xtai@norceresearch.no}
    }
    \date{}
\fi

\begin{document}
\maketitle
\raggedbottom

\begin{abstract}
We study post-hoc refinement of frozen node classifiers: given only the graph $G$ and class distributions $Q$ predicted by a frozen model, can we improve accuracy without access to node features, model parameters, or gradients? 
APPNP answers this by propagating logits with a restart towards the initial predictions, minimizing the anchored Dirichlet energy.
Instead, we consider the Potts energy, and decompose it into a Dirichlet term, which penalizes disagreement between neighbouring nodes, and a Gini term, which penalizes indecision within each node.
This decomposition motivates \emph{Propagate, Then Sharpen} (PtS), which alternates between propagation of class probabilities and node-wise, mass-preserving sharpening, with only one additional hyperparameter selected using labelled validation nodes.
Across nine homophilic graphs, with a frozen MLP backbone, PtS improves mean test accuracy over independently tuned APPNP by $1.71$ percentage points on clean inputs and $3.90$ under severe Gaussian feature corruption.
Gains over APPNP become smaller, but remain positive with frozen GCN and GraphSAGE backbones.
Sharpening also removes most of the accuracy loss of deep propagation: on clean inputs without restart, accuracy falls by $2.2$ points between $2$ and $100$ propagation steps under PtS, compared with $33.8$ for APPNP.
\end{abstract}

\section{Introduction}

Graph-based node classification uses node attributes (features) and relationships between nodes (structure).
These two inputs are not equally reliable. Features can be fragile: sensors drift or fail, attributes go missing or stale for part of the graph, or features are perturbed to protect privacy, while the graph structure usually stays intact.
When this happens, correcting the model is often not an option, because retraining may be costly or access to parameters or training data might be restricted. These constraints motivate post-hoc methods that reuse the deployed model as a frozen predictor~\citep{boudiaf2022lame}.
We ask how much prediction accuracy can be recovered using only the frozen class distributions $Q$ and the graph $G$, without access to node features, model parameters, gradients, or training data.
The refinement should apply to any backbone, so that another predictor can easily be swapped in. Validation labels should only be used for hyperparameter selection, and refinement should tolerate a mismatch between validation and test conditions.
We therefore evaluate refinement both on clean inputs, and under Gaussian feature corruption on datasets with continuous-valued node features for which additive Gaussian noise provides a meaningful controlled perturbation.

A natural way to recover accuracy on $G$ is to propagate predictions along its edges, and APPNP is the standard way to do so: personalized PageRank propagation of the logits, with a restart towards the original prediction at every step \citep{gasteiger2019appnp}.
Propagation exploits informative neighbours, but repeated propagation also removes distinctions between nodes, which can result in oversmoothing \citep{oono2020oversmoothing}, i.e. predictions in a connected component converge towards a common value as the number of propagation steps grows.
Restart is what prevents this collapse in APPNP, but it only helps if the reference prediction is reliable. If features are missing or corrupted at test time, the restart repeatedly re-injects corrupted predictions, and propagation spreads these errors through the graph.
APPNP is therefore sensitive to its depth and restart hyperparameters, and without restart it collapses at large depth (\cref{fig:depth}).

The space in which predictions are propagated matters. APPNP propagates logits, so large class margins exert great influence, which helps when margins are reliable but amplifies confident errors.
Propagating probabilities instead bounds the initial scores in $[0,1]$, limiting the influence of extreme logits. The trade-off is that averaging neighbouring distributions can weaken class preferences when neighbours disagree.
This motivates a refinement method that acts in two directions: smoothing predictions across neighbouring nodes, and sharpening the class distribution within each node. 
We start from the relaxed Potts interaction used in variational image segmentation \citep{potts1952generalized,liu2022std,tai2024pottsmgnet,liu2024doublewellnet}. 
On the probability simplex, the interaction splits exactly into a Dirichlet term, which penalizes disagreement between neighbouring nodes, and a Gini term, which penalizes indecision within each node. 
This decomposition pairs graph propagation with a local sharpening step that reduces indecision within each node.

We call the resulting method Propagate, Then Sharpen (PtS). It alternates between a propagation and a sharpening step.
Sharpening is controlled by a single parameter $\eta$: at $\eta=0$ we recover propagation without sharpening exactly, so validation can switch sharpening off, as $\eta \to \infty$ sharpening approaches a one-hot assignment when the most probable class is unique, and in between it acts as a soft thresholding.
Sharpening never changes a node's own predicted class, it changes the distribution used by the next propagation step.
PtS uses only $Q$ and $G$, labels enter only through hyperparameter selection, any backbone that outputs class probabilities can be used, and its gain over APPNP persists at large propagation depth (\cref{fig:depth}), as well as when hyperparameters are selected at a different corruption severity than the one evaluated at (\cref{fig:transfer}).

\paragraph{Contributions}
\begin{itemize}[leftmargin=*,nosep]
    \item  We introduce Propagate, Then Sharpen (PtS), which alternates probability-space graph propagation with a mass-preserving categorical sharpening step derived from the Dirichlet--Gini decomposition of the Potts interaction. Sharpening adds one hyperparameter \(\eta\), is off at $\eta=0$, provably preserves every node's predicted class, and decreases a local KL--Gini energy at each step (\cref{prop:descent}).

    \item  We show that sharpening reduces accuracy degradation with propagation depth. On a constructed two-community graph, restart alone fails while PtS keeps every node correct at every depth (\cref{prop:two-community}).  Across nine homophilic graphs on clean inputs without restart, increasing depth from $K=2$ to $K=100$ reduces mean accuracy by $5.2$ percentage points for PtS and $22.0$ for Logit-Sharp (APPNP with sharpening in  logit-space) at the same sharpening parameter $\eta=16$. Increasing PtS's sharpening parameter to $\eta=200$ reduces its drop to $2.2$ points (\cref{fig:depth}).

    \item On nine homophilic graphs, PtS gains $3.90$ points over independently tuned APPNP under severe Gaussian corruption and $1.71$ on clean inputs (\cref{tab:repair}), and improves on average over a graph-adapted LAME and a post-hoc Graph-TV baseline (\cref{tab:external_baselines,tab:baselines-clean}).
\end{itemize}

\paragraph{Related Work}
The sharpening step of PtS comes from variational image segmentation, where the Potts model penalizes the boundaries between regions of a soft assignment field. PottsMGNet and Double-well Net connect this formulation with trainable neural architectures by deriving their updates from Potts energies and operator-splitting schemes \citep{tai2024pottsmgnet,liu2024doublewellnet}, and soft threshold dynamics adds spatial regularization and shape priors to the output of a segmentation network \citep{liu2022std}. We transfer the same splitting idea from the image domain to a graph, where the assignment field becomes the class distributions of the nodes and the split separates smoothing from local sharpening. The combination already appears on graphs in diffuse-interface methods, which pair smoothing with phase separation, and in Merriman--Bence--Osher (MBO) schemes, which alternate diffusion and thresholding \citep{bertozzi2012diffuse,merkurjev2013mbo,garcia2014diffusegraph}. PtS is a soft, mass-preserving counterpart applied to frozen predictions rather than labels, approaching hard thresholding as $\eta\to\infty$ when the largest class probability is unique. Allen--Cahn Message Passing (ACMP) and GREAD apply reaction--diffusion dynamics to learned node representations inside trained graph neural networks \citep{wang2025acmpallencahnmessagepassing,choi2023gread}, PtS acts on the outputs of an already trained model and has no learnable parameters. \citet{yang2025regularizing} incorporate a graph-based non-local TV-regularized softmax into GNN training. We evaluate its objective as a post-hoc baseline on frozen predictions,
denoted Graph-TV (\cref{app:external_baselines}).

Post-hoc propagation methods refine predictions by spreading information through the graph structure. PPNP and APPNP combine neural prediction with personalized PageRank propagation in logit space, with APPNP using an iterative approximation with restart \citep{gasteiger2019appnp}. Correct \& Smooth applies label propagation after training, using known training labels to correct prediction errors and smooth the corrected predictions \citep{huang2021cs}. PtS uses no labels at inference, but its sharpening step can be inserted into the smoothing stage of Correct \& Smooth (\cref{app:correct_smooth}).
Laplacian Adjusted Maximum-Likelihood Estimation (LAME) refines a frozen classifier's output without updating its parameters. Its objective combines KL fidelity to the original class probabilities with a pairwise agreement term weighted by affinities computed from pretrained feature representations \citep{boudiaf2022lame}. Replacing that affinity with the graph operator turns its update into a fixed-point iteration of the full KL-plus-Potts energy (\cref{app:std}), which we evaluate as a further baseline (LAME-Graph). PtS alternates anchored propagation with local, mass-preserving sharpening, using the propagated distribution as the reference for each sharpening step.

\section{Method} 
\label{sec:setup}
Let $G=(\mathcal V,\mathcal E)$ be a graph with $N=|\mathcal{V}|$ nodes and let $C$ be the number of classes. 
We symmetrize directed source graphs, remove duplicate edges and existing self-loops.
Let $A\in\{0,1\}^{N\times N}$ denote the resulting binary adjacency without self-loops, $\widetilde A=A+I$ the adjacency with added self-loops, $\widetilde D$  the diagonal degree matrix $\widetilde D_{ii}=\sum_j \widetilde A_{ij}$, and $S$ the normalised adjacency 
\[
S=\widetilde D^{-1/2}\widetilde A\,\widetilde D^{-1/2}.
\]
The operator $S$ is symmetric, element-wise non-negative, and $\operatorname{spec}(S)\subseteq[-1,1]$ with $1$ an eigenvalue.

A frozen classifier $f_\theta$ produces logits $Z\in\mathbb{R}^{N\times C}$ and class distributions $Q=\softmax(Z)\in\mathbb{R}^{N\times C}$, with each row on the probability simplex $\Delta^{C-1}=\{p\in\mathbb{R}^{C}:p\ge0,\ \mathbf 1^{\top}p=1\}$. 

Refinement produces a sequence of states $U^{(0)}, \dots, U^{(K)} \in \mathbb{R}^{N\times C}$. We write $U^{(k)}_i\in\mathbb{R}^{C}$ for row $i$ of $U^{(k)}$, treated as a column vector. 

\begin{figure}[H]
    \centering
    \resizebox{0.95\linewidth}{!}{
\begingroup

\definecolor{ink}{HTML}{20242A}
\definecolor{muted}{HTML}{747B87}
\definecolor{hair}{HTML}{D9DDE3}
\definecolor{redc}{HTML}{C94A5A}
\definecolor{bluec}{HTML}{467AA5}
\definecolor{accent}{HTML}{BC7425}

\definecolor{methodAPPNP}{HTML}{2A78D6}
\definecolor{methodPPR}{HTML}{AF8724}
\definecolor{methodPtS}{HTML}{A33B57}

\tikzset{
  edge/.style={draw=hair,line width=.55pt},
  arrow/.style={
    -{Latex[length=2.2mm,width=1.35mm]},
    draw=ink,line width=.82pt
  },
  msg/.style={
    -{Latex[length=1.8mm,width=1.1mm]},
    draw=muted,line width=.68pt
  },
  stage/.style={
    font=\sffamily\bfseries\fontsize{8.5}{9.9}\selectfont,
    text=ink,align=center
  },
  tiny/.style={
    font=\sffamily\fontsize{5.9}{6.9}\selectfont,
    text=muted,align=center
  },
  state/.style={
    font=\sffamily\fontsize{7.2}{8.6}\selectfont,
    text=ink,align=center
  },
}

\newcommand{\graphbase}[3]{%
  \begin{scope}[shift={(#1,#2)}]
    \coordinate (n1) at (-1.12,.56);
    \coordinate (n2) at (-.55,.98);
    \coordinate (n3) at (-.73,-.04);
    \coordinate (n4) at (.00,.32);
    \coordinate (n5) at (.70,.06);
    \coordinate (n6) at (1.23,.64);
    \coordinate (n7) at (1.29,-.49);
    \coordinate (n8) at (.61,-.79);

    \foreach \a/\b in {
      1/2,1/3,2/3,2/4,3/4,4/5,5/6,5/7,6/7,6/8,7/8%
    }{
      \draw[edge] (n\a)--(n\b);
    }

    \foreach \i/\p in {#3}{
      \begin{scope}[shift={(n\i)}]
        \ifnum\i=4\relax
          \def\radius{.245}
        \else
          \def\radius{.225}
        \fi
        \pgfmathsetmacro{\endangle}{90+360*\p}
        \fill[bluec] (0,0) circle (\radius);
        \fill[redc] (0,0) -- (90:\radius)
          arc[start angle=90,end angle=\endangle,radius=\radius]
          -- cycle;
        \draw[white,line width=.6pt] (0,0) circle (\radius);
      \end{scope}
    }
  \end{scope}%
}

\newcommand{\pill}[4]{%
  \begin{scope}[shift={(#1,#2)}]
    \pgfmathsetmacro{\rw}{1.12*#3}
    \pgfmathsetmacro{\bw}{1.12-\rw}
    \fill[redc] (-.56,-.072) rectangle ++(\rw,.144);
    \fill[bluec] ({-.56+\rw},-.072) rectangle ++(\bw,.144);
    \draw[hair,line width=.4pt]
      (-.56,-.072) rectangle ++(1.12,.144);
    \node[tiny,anchor=north] at (0,-.11) {#4};
  \end{scope}%
}

\resizebox{\linewidth}{!}{%
\begin{tikzpicture}[x=1cm,y=1cm,line cap=round,line join=round]

\def\xA{-4.40}
\def\xB{0.00}
\def\xC{4.40}
\def\gy{-0.22}

\node[stage] at (\xA,1.65) {Frozen $Q=U^{(0)}$};
\node[stage,text=methodPPR] at (\xB,1.65) {Propagate on $G$};
\node[stage,text=methodPtS] at (\xC,1.65) {Sharpen each node};


\graphbase{\xA}{\gy}{
  1/.92,2/.86,3/.82,4/.45,5/.16,6/.08,7/.13,8/.10}
\coordinate (fa) at (\xA,{\gy+.32});
\draw[ink,line width=.8pt] (fa) circle (3.05mm);
\pill{\xA}{-1.48}{.45}{$45/55$}

\graphbase{\xB}{\gy}{
  1/0.874797,2/0.777220,3/0.773354,4/0.560250,
  5/0.200500,6/0.113287,7/0.118119,8/0.102825}
\coordinate (fb) at (\xB,{\gy+.32});
\draw[ink,line width=.8pt] (fb) circle (3.05mm);

\draw[msg,shorten <=2.5mm,shorten >=3.15mm]
  ({\xB-.55},{\gy+.98}) -- (fb);
\draw[msg,shorten <=2.5mm,shorten >=3.15mm]
  ({\xB-.73},{\gy-.04}) -- (fb);
\draw[msg,shorten <=2.5mm,shorten >=3.15mm]
  ({\xB+.70},{\gy+.06}) -- (fb);
\pill{\xB}{-1.48}{0.560250}{$56/44$}

\graphbase{\xC}{\gy}{
  1/0.989724,2/0.960459,3/0.958546,4/0.660142,
  5/0.029896,6/0.008454,7/0.009162,8/0.007058}
\coordinate (fc) at (\xC,{\gy+.32});
\draw[line width=.95pt] (fc) circle (3.25mm);
\pill{\xC}{-1.48}{0.660142}{$66/34$}

\draw[arrow]
  ({\xA+1.60},{\gy+.32}) -- ({\xB-1.60},{\gy+.32});
\draw[arrow]
  ({\xB+1.60},{\gy+.32}) -- ({\xC-1.60},{\gy+.32});

\node[state,anchor=south]
  at ({\xB-2.20},{\gy+.44}) {$U^{(k)}$};
\node[state,anchor=south]
  at ({(\xB+\xC)/2},{\gy+.44}) {$\widetilde U^{(k+1)}$};

\draw[arrow,rounded corners=2mm]
  ({\xC+1.60},{\gy+.32})
  -- ({\xC+1.85},{\gy+.32})
  -- ({\xC+1.85},-2.10)
  -- node[midway,below=3pt,state] {Repeat $K$ times}
     ({\xB-2.20},-2.10)
  -- ({\xB-2.20},{\gy+.32});

\node[state,fill=white,inner sep=2pt]
  at (\xC,-2.10) {$U^{(k+1)}$};
\fill[ink] ({\xB-2.20},{\gy+.32}) circle (.035);

\end{tikzpicture}%
}

\endgroup}
    \caption{\textbf{Propagate, Then Sharpen.} Node colours show two-class proportions in an illustrative first iteration ($\alpha =0.1$, $\eta =3.5$), initialized at $Q$. Propagation and sharpening update all nodes, and the sharpened state becomes the current state for the next iteration. Propagation includes self-loops and a restart towards the fixed $Q$. Pie charts show normalised class proportions.}
    \label{fig:overview}
\end{figure}

\subsection{Propagate} 
\label{TRANSPORT}

We start from the personalized PageRank recursion of APPNP \citep{gasteiger2019appnp}, but apply it to $Q$ instead of logits:
\begin{equation}
U^{(0)}=Q,\qquad
U^{(k+1)}=\mathcal T\bigl(U^{(k)}\bigr),
\qquad
\mathcal T(U)=\alpha Q+(1-\alpha)\,SU 
\label{eq:ppr-prob}
\end{equation}
with restart probability $\alpha\in[0,1]$ and $K$ propagation steps. The only difference from APPNP is the state being propagated: APPNP propagates the logits $Z$ and applies a softmax at the end, whereas this recursion propagates the class distributions $Q$. We call the variant \emph{PPR-Prob} and use it to isolate the effect of the propagation space. Since $S$ does not preserve unit row sums, we retain the unnormalized states during propagation and row-normalize $U^{(K)}$ to obtain the final class distributions. Normalising after every step instead is the PPR-rn variant compared in \cref{app:mass}.

\subsection{Propagate, Then Sharpen}
\label{sec:method}

PtS extends this propagation with a node-wise categorical sharpening step. In a reaction--diffusion interpretation, this local sharpening plays the role of the reaction step. As illustrated in \cref{fig:overview}, from $U^{(0)}=Q$, we alternate between
\begin{subequations}
\label{eq:PtS-update}
\begin{align}
\widetilde U^{(k+1)}
&=
\alpha Q+(1-\alpha)S U^{(k)}
&& \text{propagate}
\label{eq:PtS-transport}
\\[2pt]
U^{(k+1)}_i
&=
R_\eta\!\left(\widetilde U^{(k+1)}_i\right) \quad i = 1,\dots,N
&& \text{ sharpen}
\label{eq:PtS-sharpen}
\end{align}
\end{subequations}
where the reaction \(R_\eta\) is defined in \eqref{eq:reaction_with_m} below. 
Symmetric normalization does not preserve unit row sums. We therefore define the sharpening map on the simplex and then lift it to unnormalised rows. For $p \in \Delta^{C-1}$ with $p > 0$ and a reaction strength $\eta \ge 0$, let
\begin{equation}\label{eq:sharpen-simplex}
  r_\eta(p) \;=\; \softmax\!\left(\log p + \eta\, p\right),
\end{equation}
which maps $\Delta^{C-1}$ into itself. For a propagated row $h \in \mathbb{R}^{C}_{>0}$ with total mass $m(h) = \mathbf{1}^{\top} h$, the reaction is
\begin{equation}
\label{eq:reaction_with_m}
  R_\eta(h) \;=\; m(h)\;
    r_\eta\!\left(\frac{h}{m(h)}\right).
\end{equation}
At $\eta =0$ we recover $R_0(h) =h$, so PtS reduces exactly to PPR-Prob with the same $(\alpha,K)$.
Every row of $\widetilde U^{(k+1)}$ is strictly positive, so \eqref{eq:reaction_with_m} is well defined: $Q = \operatorname{softmax}(Z) > 0$ elementwise, and the added self-loops give $S$ a strictly positive diagonal, so positivity is preserved for every $\alpha \in [0,1]$.

Writing the reaction in the two stages of \eqref{eq:sharpen-simplex}--\eqref{eq:reaction_with_m} separates what it does. The map $r_\eta$ redistributes probability mass within a node, and the factor $m(h)$ restores the row total,
\begin{equation*}\label{eq:mass}
  \mathbf{1}^{\top} R_\eta(h) \;=\; m(h)\,\mathbf{1}^{\top} r_\eta\!\left(h/m(h)\right)
  \;=\; m(h) \;=\; \mathbf{1}^{\top} h .
\end{equation*}
The reaction therefore changes a node's class distribution while preserving the mass the row carries into the next propagation step, which is what determines the node's total contribution to its neighbours. We compare this choice with normalizing each row's mass after every iteration in \cref{tab:mass-preservation}. Since $\eta$ multiplies the normalised rows, the sharpening strength is invariant to the row total.


For every $\eta\geq0$, the reaction keeps the relative ranking of the classes within each row (\cref{app:sharpening}). It therefore never changes a node's own predicted class, and influences later predictions only through the state that enters the next propagation step. 

Without restart ($\alpha=0$) and without sharpening, propagation collapses: on a connected graph, every node converges to the same class distribution, and the predictions of APPNP converge to a single class shared by all nodes (\cref{app:collapse}). The experiments in \cref{fig:depth} show this collapse and how sharpening counteracts it.

\subsection{Energy interpretation and local descent}
\label{sec:theory}
Motivated by the entropy-regularized Potts formulations in
\citet{tai2024pottsmgnet}, consider normalised class distributions
$u_i\in\Delta^{C-1}$ and a symmetric affinity matrix $W$ with
$W_{ij}\geq0$ (for PtS, $W=S$). The relaxed Potts interaction is
\[
E(U)=\frac{1}{2}\sum_{i,j}W_{ij}
\left(1-u_i^\top u_j\right).
\]
For one-hot assignments, this is a weighted penalty for disagreement
between connected nodes. For soft assignments, it also penalizes
indecision within each node. Writing $d_i=\sum_j W_{ij}$ gives
\begin{equation*}
E(U)
=
\underbrace{
\frac{1}{4}\sum_{i,j}W_{ij}\|u_i-u_j\|^2
}_{E_{\mathrm D}(U)}
+
\underbrace{
\frac{1}{2}\sum_i d_i\left(1-\|u_i\|^2\right)
}_{E_{\mathrm G}(U)}.
\label{eq:potts-decomposition}
\end{equation*}
The Dirichlet term \(E_{\mathrm D}(U)\) penalizes differences between neighbouring distributions, while the Gini term \(E_{\mathrm G}(U)\) penalizes indecision within each node. This decomposition motivates the two operations of PtS. Propagation is a gradient step on an anchored, degree-normalised Dirichlet objective, while sharpening decreases a local KL--Gini energy. This does not establish descent of the joint Potts objective. \cref{fig:energy_evolve} illustrates how these components interact during propagation on WikiCS. 

Propagation descends an anchored, degree-normalised Dirichlet objective. Following the optimization view of APPNP, consider $J(U)=\tfrac{\alpha}{2}\|U-Q\|_F^2+\tfrac{1-\alpha}{2}\operatorname{tr}\bigl(U^{\top}(I-S)U\bigr)$,
whose first term anchors the state near $Q$ and whose second term is a
degree-normalised Dirichlet energy, so that $\alpha$ balances closeness to the original prediction against smoothness over the graph. Since $\nabla J(U)=\alpha(U-Q)+(1-\alpha)(I-S)U$, a gradient step with unit step size gives $U-\nabla J(U)=\alpha Q+(1-\alpha)SU=\mathcal T(U)$: one propagation step is exactly one unit-step gradient step on $J$ \citep{zhu2021interpretingunifyinggraphneural}.

Sharpening descends the Gini term of a single node. For a propagated class distribution $p_i$, define
\begin{equation}
E_\eta(u;p_i)
=
\operatorname{KL}(u\|p_i)
+\frac{\eta}{2}\left(1-\|u\|^2\right),
\qquad u\in\Delta^{C-1}
\label{eq:egp}
\end{equation}
where the KL term keeps the update close to $p_i$ and the Gini term favours more decisive class distributions. Stationarity gives the fixed-point condition $u=\softmax(\log p_i+\eta u)$, and one fixed-point step initialized at $u=p_i$, as in the limited fixed-point iterations of PottsMGNet \citep{tai2024pottsmgnet}, returns exactly the sharpening map \eqref{eq:sharpen-simplex}. \Cref{app:full_derivation} gives the full derivation, and that single step already decreases the energy it comes from.

\begin{proposition}[Sharpening descends the local energy]
\label{prop:descent}
Let $p\in\Delta^{C-1}$ with $p>0$, let $\eta\ge0$ and let $p^{*}=r_\eta(p)$.
Then $E_\eta(p^{*};p)\le E_\eta(p;p)$, and consequently
$\tfrac{\eta}{2}\bigl(\|p^{*}\|^{2}-\|p\|^{2}\bigr)\ge\operatorname{KL}(p^{*}\|p)\ge0$,
so one sharpening step does not increase the Gini term. Moreover, $p^{*}$ preserves the ordering of the classes of $p$.
\end{proposition}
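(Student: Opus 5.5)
The approach is to compute $E_\eta(p^*;p)$ directly from the explicit form of $p^*$ and reduce the descent claim to a single inequality. Write $p^*_c = p_c e^{\eta p_c}/Z$ with $Z=\sum_c p_c e^{\eta p_c}$. Then $\log(p^*_c/p_c)=\eta p_c-\log Z$, so
\[
\operatorname{KL}(p^*\|p)=\eta\,\langle p^*,p\rangle-\log Z .
\]
Since $E_\eta(p;p)=\tfrac{\eta}{2}(1-\|p\|^2)$, the inequality $E_\eta(p^*;p)\le E_\eta(p;p)$ is equivalent to $\operatorname{KL}(p^*\|p)\le\tfrac{\eta}{2}(\|p^*\|^2-\|p\|^2)$. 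This already gives the ``consequently'' chain: nonnegativity of KL then forces $\|p^*\|^2\ge\|p\|^2$, so the Gini term does not increase.

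The plan for proving the energy inequality is to use Jensen on $\log Z$. Since $\log$ is concave and $Z=\mathbb E_{p}[e^{\eta p_c}]$, we have $\log Z\ge \eta\,\mathbb E_p[p_c]=\eta\|p\|^2$. Hence
\[
\operatorname{KL}(p^*\|p)\le \eta\langle p^*,p\rangle-\eta\|p\|^2 .
\]
It then suffices to show
\[
\langle p^*,p\rangle-\|p\|^2\le\tfrac12\bigl(\|p^*\|^2-\|p\|^2\bigr),
\]
which rearranges to $-\tfrac12\|p^*-p\|^2\le 0$. This is trivially true. So the descent reduces to Jensen plus completing the square, and I do not expect a real obstacle here. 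The only care needed is to check the direction of Jensen's inequality, which is where I would double-check signs.

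An alternative route, in case the above slips, is to view $E_\eta(\cdot;p)$ as a difference of convex functions. Its linearization at $u=p$ gives a convex majorant, KL plus a linear term, whose exact minimizer over the simplex is precisely $\operatorname{softmax}(\log p+\eta p)=p^*$. The standard DC/MM argument, $E(p^*)\le \text{majorant}(p^*)\le\text{majorant}(p)=E(p)$, then gives descent. This is the same inequality in disguise, and it also explains why one fixed-point step decreases the energy.

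For order preservation, the map $t\mapsto t e^{\eta t}$ is strictly increasing on $(0,\infty)$ for $\eta\ge0$, and $p^*_c$ is this map applied to $p_c$ divided by a common positive constant $Z$. Hence $p_a>p_b\iff p^*_a>p^*_b$, and ties are preserved.
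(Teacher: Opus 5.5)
Your proof is correct and is essentially the paper's argument. The paper majorizes $E_\eta(\cdot\,;p)$ by linearizing the concave Gini term at $p$ and minimizes that majorizer $M$ at $p^{*}$ via the Gibbs variational principle; your Jensen bound $\log Z\ge\eta\|p\|^2$ is the step $M(p^{*})\le M(p)$, and your completed square $-\tfrac12\|p^{*}-p\|^2\le0$ is the step $E_\eta(p^{*};p)\le M(p^{*})$ (your ``alternative route'' is the paper's proof), while your order-preservation argument via monotonicity of $t\mapsto te^{\eta t}$ is equivalent to the paper's ratio identity $p^{*}_c/p^{*}_b=(p_c/p_b)e^{\eta(p_c-p_b)}$.
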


The proof (\cref{app:sharpening}) uses the concavity of the Gini term: sharpening minimizes a linearized upper bound of $E_\eta(\cdot\,;p)$ that is tight at $p$, so iterating it is a concave--convex procedure for \eqref{eq:egp} \citep{concave_convex}.

\paragraph{Restart is not enough} Sharpening also changes the behaviour of deep propagation. Restart does not by itself protect a community from a confident neighbouring one: on a two-community graph, APPNP and PPR-Prob with $\alpha=0.1$ misclassify one entire community for every depth $K\ge3$, while PtS with $\eta=4$ keeps every node correct at every depth (\cref{prop:two-community} and \cref{fig:class-preservation} in \cref{app:restart}). The mechanism there is that propagation cannot push a node's correct-class probability below $0.546$ while it is at least $0.6$ everywhere, and one sharpening step lifts $0.546$ back above $0.6$.

\begin{takeaway}
The Potts decomposition motivates combining agreement between neighbours with decisiveness within each node. PtS propagates class probabilities, then applies one local KL--Gini sharpening step while preserving the propagated row mass. Sharpening preserves the current predicted class, so it affects labels only through subsequent propagation. Setting $\eta=0$ recovers PPR-Prob exactly.
\end{takeaway}

\section{Experimental protocol}
\label{sec:protocol}

We use nine homophilic graphs: WikiCS \citep{mernyei2022wikicswikipediabasedbenchmarkgraph}, Cora-TAPE, PubMed-TAPE and TAPE-Arxiv23 \citep{he2024harnessing}, ogbn-arxiv and ogbn-products \citep{hu2020ogb}, and Ele-Photo, Ele-Computers and Books-History \citep{yan2023a}. We use Roman-Empire and Amazon-Ratings \citep{platonov2024criticallookevaluationgnns} as heterophilic controls and report them separately. WikiCS uses its first ten supplied training/validation splits and its shared test mask, the OGB graphs use their official split, and all other graphs use ten unstratified random 60/20/20 splits. We use three backbone seeds per split, each (graph, split, backbone seed) defines one \emph{unit}. Dataset statistics are given in \cref{app:datasets}.

For each unit we train a graph-blind MLP on clean features, minimizing cross-entropy on the training labels, and freeze the checkpoint with the highest clean validation accuracy, architecture and optimizer settings are given in \cref{app:model_sizes}. The GCN~\citep{kipf2017gcn} and GraphSAGE~\citep{hamilton2017graphsage} backbones follow the same training and corruption protocol, model sizes are given in \cref{tab:model_sizes}.

At inference, we corrupt all nodes as
\begin{equation*}
X_{ij}^{(\sigma,r)}
=X_{ij}+\sigma s_j\xi_{ij}^{(r)},
\qquad \xi_{ij}^{(r)}\sim\mathcal N(0,1),
\end{equation*}
where $s_j$ is the standard deviation of feature $j$ over the training nodes and $\sigma\in\{0,0.5,1,1.5,2\}$. We draw three independent Gaussian fields $\xi^{(r)}$ per graph and split, shared across backbone seeds and methods, and reuse them across severities, so that the severities form a nested noise ladder. Cora-TAPE and PubMed-TAPE use RoBERTa-base~\citep{liu2019robertarobustlyoptimizedbert} CLS embeddings computed without fine-tuning, and the three CS-TAG graphs (Ele-Photo, Ele-Computers, Books-History) use the published RoBERTa-base CLS embeddings. For each corruption, we compute $Z=f_\theta(X^{(\sigma,r)})$ and $Q=\operatorname{softmax}(Z)$ once and use these across methods and experiments. APPNP refers to the propagation stage of \citet{gasteiger2019appnp}, applied post hoc to frozen logits $Z$, rather than the full model trained end to end.

APPNP, PPR-Prob and PtS are each tuned independently with 250 Optuna trials~\citep{akiba2019optuna} per unit, severity and noise draw, maximizing validation accuracy. APPNP and PPR-Prob tune $(\alpha,K)$, and PtS additionally tunes $\eta$, including $\eta=0$, full search spaces are in \cref{app:search} and the selected values in \cref{app:hyperparameters}. Unless stated otherwise, hyperparameters are selected at the same severity at which they are evaluated, transfer across severities is reported in \cref{fig:transfer}. For each severity, we average test accuracies over draws, then over seeds, within each split, and report mean $\pm$ standard deviation across splits. For the OGB graphs, which have a single official split, standard deviations instead describe variation across
seed$\times$draw repeats. Differences between methods are computed within (split, seed, draw) before aggregation.

We compare against two external post-hoc baselines using the same frozen
predictions, graph and validation data. LAME-Graph is the graph-adapted LAME of \citet{boudiaf2022lame}, which replaces the feature affinity by $S$ (\cref{app:std}) and tunes the interaction strength and the number of iterations using
250 Optuna trials. Graph-TV applies the non-local total-variation objective of \citet{yang2025regularizing} post hoc to the frozen predictions, with $\varepsilon=1$ and the regularization strength selected by validation accuracy over a grid of 26 candidates. Graph-TV is evaluated at $\sigma\in\{0,2\}$ only and excludes ogbn-products because of its memory requirements. Both baselines are detailed in \cref{app:external_baselines}.

\section{Results}

\subsection{Accuracy on clean and corrupted inputs}
Across the nine main graphs with a frozen MLP, PtS improves mean test accuracy over independently tuned PPR-Prob by $0.57$ percentage points on clean inputs and $1.09$ at $\sigma=2$. The corresponding gains over post-hoc APPNP
are $1.71$ and $3.90$ points (\cref{tab:repair}).
With the frozen MLP backbone at $\sigma=2$, mean accuracy rises from $41.95\%$ for the unrefined predictions $Q$ to $56.34\%$ for APPNP and $60.24\%$ for PtS.

\begin{table}[H]
\centering\small
\caption{Test accuracy (\%) with a frozen MLP backbone. Each method is tuned independently at the evaluation severity. The mean excludes the heterophilic controls.}
\label{tab:repair}
\resizebox{\linewidth}{!}{%
\begin{tabular}{@{}lrrrrrrrr@{}}
\toprule
& \multicolumn{4}{c}{$\sigma=0$} & \multicolumn{4}{c}{$\sigma=2$} \\
\cmidrule(lr){2-5}\cmidrule(lr){6-9}
Dataset & $Q$(MLP) & APPNP & PtS & PtS $-$ APPNP & $Q$(MLP) & APPNP & PtS & PtS $-$ APPNP \\
\midrule
WikiCS & 72.67 & 77.78 & \textbf{78.42} & +0.64 $\pm$ 0.50 & 46.36 & 67.97 & \textbf{72.15} & +4.18 $\pm$ 1.75 \\
Cora-TAPE & 60.51 & 76.16 & \textbf{79.74} & +3.58 $\pm$ 1.23 & 43.82 & 68.16 & \textbf{72.30} & +4.13 $\pm$ 1.99 \\
PubMed-TAPE & 81.59 & 84.35 & \textbf{85.18} & +0.83 $\pm$ 0.64 & 64.56 & 78.87 & \textbf{80.73} & +1.86 $\pm$ 1.44 \\
TAPE-Arxiv23 & 67.59 & 69.53 & \textbf{69.72} & +0.19 $\pm$ 0.10 & 28.56 & 35.63 & \textbf{38.91} & +3.29 $\pm$ 0.26 \\
ogbn-arxiv & 55.95 & 66.10 & \textbf{66.56} & +0.46 $\pm$ 0.51 & 22.65 & 33.86 & \textbf{40.62} & +6.76 $\pm$ 1.73 \\
ogbn-products & 59.58 & 70.58 & \textbf{72.17} & +1.58 $\pm$ 0.24 & 22.46 & \textbf{28.54} & 27.33 & -1.21 $\pm$ 0.17 \\
Ele-Photo & 66.83 & 71.87 & \textbf{74.91} & +3.04 $\pm$ 0.63 & 44.15 & 55.97 & \textbf{61.43} & +5.46 $\pm$ 1.04 \\
Ele-Computers & 60.96 & 75.37 & \textbf{80.11} & +4.74 $\pm$ 0.96 & 38.05 & 59.94 & \textbf{69.56} & +9.63 $\pm$ 1.00 \\
Books-History & 81.38 & 82.59 & \textbf{82.88} & +0.29 $\pm$ 0.23 & 66.91 & 78.11 & \textbf{79.13} & +1.02 $\pm$ 0.61 \\
\addlinespace[2pt]\multicolumn{9}{@{}l}{\scriptsize\textit{heterophilic controls}} \\
Roman-Empire & 65.54 & \textbf{65.65} & 65.54 & -0.11 $\pm$ 0.13 & \textbf{19.34} & 19.30 & 19.30 & -0.01 $\pm$ 0.02 \\
Amazon-Ratings & 49.52 & 51.63 & \textbf{52.74} & +1.12 $\pm$ 0.35 & 34.17 & 36.74 & \textbf{36.78} & +0.04 $\pm$ 0.08 \\
\midrule
\textbf{Mean (9 main)} & 67.45 & 74.93 & \textbf{76.63} & +1.71 & 41.95 & 56.34 & \textbf{60.24} & +3.90 \\
\bottomrule
\end{tabular}%
}
\end{table}

PtS beats APPNP on all nine main graphs on clean inputs and on eight out of nine at $\sigma=2$, where the positive gains range from $1.02$ to $9.63$ points. The exception is ogbn-products, where APPNP is better on noisy data while PtS beats it on clean data.
The advantage over APPNP is negligible on the two heterophilic controls at $\sigma =2$.
For the other baselines, at $\sigma=2$, PtS also improves mean accuracy over LAME-Graph by $3.75$ points across nine graphs. LAME-Graph uses the coupled Potts update, placing the graph interaction inside the softmax while retaining $Q$ as its reference, PtS propagates first and sharpens relative to the propagated distribution (\cref{app:std}). Against Graph-TV, the improvement is $4.24$ points on the eight graphs evaluated by both methods (\cref{tab:external_baselines}).

\subsection{Sharpening and propagation depth}


At $\sigma =2$, independently tuned PPR-Prob improves mean accuracy over APPNP by $2.81$ points, and PtS improves over PPR-Prob by an additional $1.09$ points. On clean inputs, the PtS--PPR-Prob gap is $0.57$ points. Per-dataset comparisons are given in \cref{tab:decomp}, and the full corruption ladder is reported in \cref{tab:full-gaussian-ladder}.
Additionally, to test if the selected PtS configurations depend on sharpening, we keep their ($\alpha, K$) fixed and set $\eta =0$ without retuning (Reaction OFF). 
Mean accuracy at $\sigma =2$ then falls from $60.24\%$ to $54.93\%$, a drop of $5.31$ points (\cref{tab:decomp_compressed}). \Cref{fig:noise} visualizes how the two contributions change across the full corruption ladder, and \cref{tab:full-gaussian-ladder} gives the dataset-level values.

\begin{table}[H]
\centering
\caption{Mean accuracy (\%) over the nine main graphs at $\sigma=2$ with the frozen MLP. PPR-Prob is independently tuned, Reaction OFF reuses the PtS-selected propagation settings without retuning.}
\label{tab:decomp_compressed}
\small
\begin{tabular}{@{}lll@{}}
\toprule
Method  & Sharpening & Accuracy \\
\midrule
PPR-Prob  &$\eta=0$ & 59.15 \\
PtS & Selected $\eta$ & \textbf{60.24} \\
Reaction OFF  & $\eta=0$ & 54.93 \\
\bottomrule
\end{tabular}
\end{table}


Next, we vary the propagation depth $K$, while holding each curve's restart $\alpha$ and sharpening strength $\eta$ fixed (\cref{fig:depth}).  Logit-Sharp applies the sharpening step on top of logit-space propagation using APPNP.

Without restart, increasing $K$ from $2$ to $100$ reduces clean accuracy by $33.8$ points for APPNP and $31.2$ for PPR-Prob, compared with $2.2$ points for PtS with $\eta =200$. With noise $\sigma =2$, APPNP and PPR-Prob lose $21.1$ and $20.5$ points, while PtS with $\eta= 200$ reaches $57.7\%$ at both depths with a maximum in between.
With restart $\alpha=0.1$ on clean inputs, accuracy decreases from $74.5\%$ to $70.4\%$ for APPNP and from $75.7\%$ to $74.8\%$ for PtS with $\eta=200$ between $K=2$ and $K=100$: losses of approximately $4.1$ and $0.9$ percentage points.
At $K=100$, both PtS curves remain above APPNP and PPR-Prob in every restart and corruption condition shown; complete depth results are reported in \cref{tab:depth-values}.

\begin{figure}[H]
    \centering
    \includegraphics[width=0.85\linewidth]{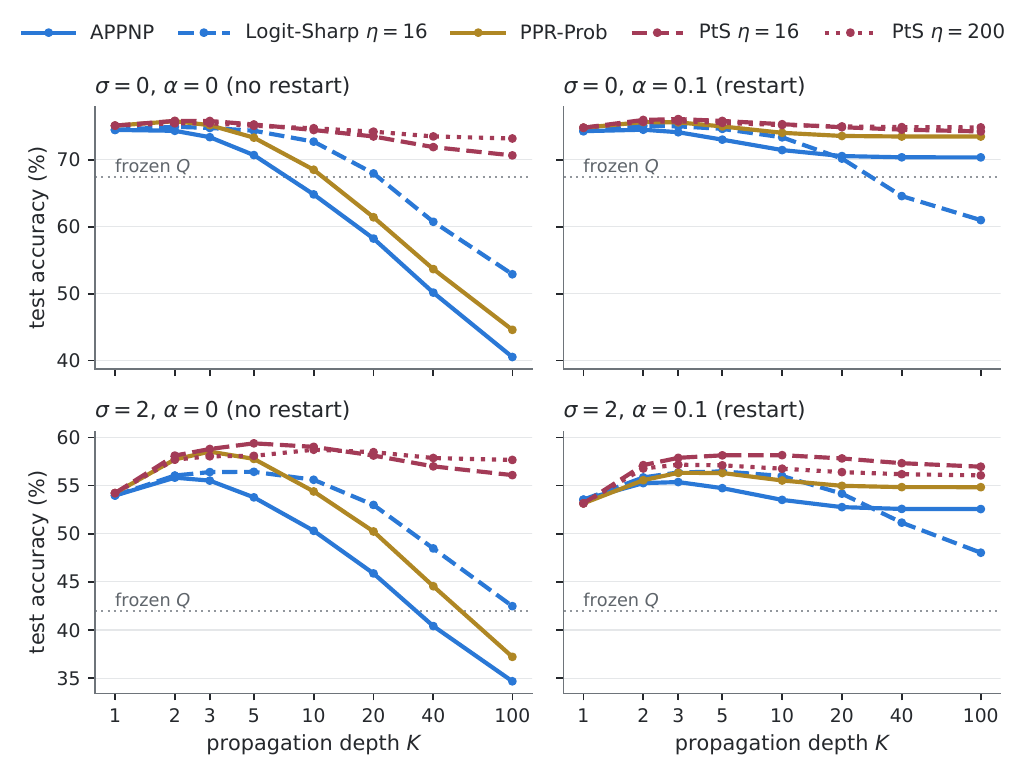}
    \caption{Mean accuracy over the nine main graphs at fixed $(\alpha,\eta)$. Columns vary the restart, rows vary the corruption severity.}
    \label{fig:depth}
\end{figure}

\subsection{Corruption severity}
\Cref{fig:noise} shows positive mean gains for PtS over both APPNP and PPR-Prob at every tested corruption severity. With hyperparameters selected independently for each method at the evaluation severity, the gain over APPNP is $1.71$ percentage points on clean inputs and $3.90$ at $\sigma=2$, while the gains over PPR-Prob are $0.57$ and $1.09$ points. 
The main comparisons select hyperparameters at the severity used for evaluation. \Cref{fig:transfer} examines how improvements transfer across selection and evaluation severities. The mean PtS-APPNP gain remains positive across all 25 combinations. When both methods are selected using clean validation data, PtS keeps a $2.28$ point advantage at $\sigma=2$, compared with $3.90$ points under severity-matched selection. 

\begin{figure}[H]
    \centering
    \begin{minipage}[t]{0.50\linewidth}
        \vspace{0pt}
        \centering
        \includegraphics[width=\linewidth]{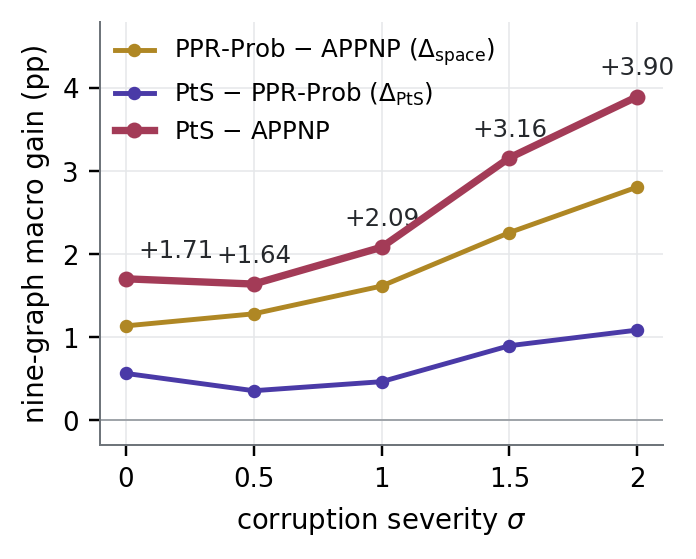}
        \caption{Mean test-accuracy gains in percentage points over the nine main homophilic graphs with a frozen MLP. The curves compare PPR-Prob with APPNP, PtS with PPR-Prob, and PtS with APPNP. Each method is independently selected at the evaluation severity.}
        \label{fig:noise}
    \end{minipage}
    \hfill
    \begin{minipage}[t]{0.48\linewidth}
        \vspace{0pt}
        \centering
        \includegraphics[width=\linewidth]{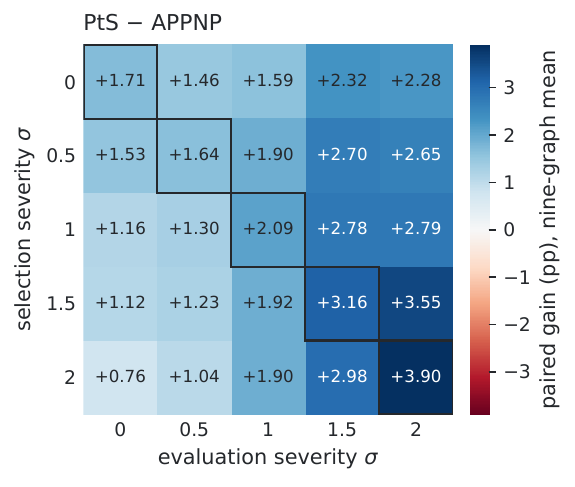}
        \caption{Mean PtS$-$APPNP test-accuracy gains in percentage points over the nine homophilic graphs with a frozen MLP. Rows give the severity used for hyperparameter selection and columns the evaluation severity. The first row shows clean-selection transfer.}
        \label{fig:transfer}
    \end{minipage}
\end{figure}

\subsection{Backbone}
Under the same-severity selection protocol used for the main comparisons, we apply PtS to frozen GCN and GraphSAGE predictions, this still gives positive gains of $0.88$ and $1.23$ points at $\sigma =2$ (\cref{tab:backbone-summary}). Both backbones show positive gains over APPNP on all nine main graphs at $\sigma=2$, and on seven (GCN) and nine (GraphSAGE) on clean inputs (\cref{app:stronger_backbones}). Their smaller gains compared to MLPs are consistent with graph-aware backbones profiting less from post-hoc graph propagation. 
The Correct \& Smooth extension is evaluated separately in \cref{app:correct_smooth}.

\begin{table}[H]
\centering\small
\caption{Mean test-accuracy gain of PtS over independently tuned
APPNP across the nine main datasets, in percentage points.}
\label{tab:backbone-summary}
\begin{tabular}{@{}lrr@{}}
\toprule
Frozen backbone & $\sigma=0$ & $\sigma=2$ \\
\midrule
MLP       & +1.71 & +3.90 \\
GCN       & +0.41 & +0.88 \\
GraphSAGE & +0.47 & +1.23 \\
\bottomrule
\end{tabular}
\end{table}

\subsection{Calibration and runtime}
With an MLP at $\sigma =2$, PtS has a higher raw negative log-likelihood (NLL) and expected calibration error (ECE) than APPNP, so its uncalibrated confidences are less trustworthy. Temperature scaling fitted on validation nodes \citep{guo2017calibration} reduces the mean NLL of PtS below that of APPNP without changing accuracy, although its ECE stays higher (\cref{calibration}).
PtS costs between $1.3$ and $54$ ms per propagation step (\cref{tab:runtime}).

\begin{takeaway}
PtS improves mean accuracy on clean and corrupted inputs and substantially reduces the accuracy loss from deep propagation.
It keeps its advantage under clean-only selection and with graph-aware backbones. Sharpening worsens raw calibration, which temperature scaling largely mitigates.
\end{takeaway}

\section{Discussion}

\paragraph{Findings}
Sharpening never changes a node's own predicted class, so its accuracy gain comes from subsequent propagation steps.
Turning the sharpening off at PtS-selected $(\alpha,K)$ lowers accuracy by $5.31$ points, showing that these selected propagation settings depend on sharpening (\cref{tab:decomp}).
Gains over APPNP are generally larger in higher local-homophily bins and in lower prediction-confidence quintiles (\cref{tab:per_node_performance}).
The largest gain is in the $0.6$--$0.8$ local-homophily bin rather than at $0.8$--$1$. One possible explanation is that, when neighbours are confident in their predictions, propagation alone already produces a decisive and mostly correct distribution, so sharpening has less to add, while in more uncertain local regimes, averaging can leave the node with a correct but small margin and one sharpening step restores it (\cref{lem:restore}) before the next propagation. The two local-homophily bins below 0.4 have negative PtS-APPNP gains.
Post-hoc refinement also recovers a substantial part of the corruption damage: with the MLP backbone the frozen prediction loses $25.5$ points between $\sigma =0$ and $\sigma =2$, and PtS cuts that loss to $16.4$ points, a $36\%$ reduction against $27\%$ for APPNP (\cref{tab:repair}).
The graph-aware backbones lose less to begin with ($8$ and $14$ points) and refinement still recovers $25\%$ and $22\%$ of it (\cref{app:stronger_backbones}). Their smaller headroom might be due to their propagation operators acting as a form of Laplacian smoothing that reduces Dirichlet disagreement \citep{li2018deeperinsightsgraphconvolutional}.

\paragraph{Limitations}
The main results select hyperparameters at the severity at which they are evaluated, so they assume validation nodes that reflect test-time conditions. \cref{fig:transfer} shows that the gain survives clean-only validation, but the gain shrinks.
We study Gaussian corruption of node features only, leaving missing features and perturbations of the graph itself to future work. Additionally, sharpening worsens raw calibration, but this is mitigated by temperature scaling (\cref{calibration}).
All nodes share a single $\eta$, and propagation favours agreement between connected nodes so that neither component can adapt to the reliability of a local neighbourhood, and misleading neighbours can reinforce errors rather than correct them.
The negligible gains on the heterophilic controls and the loss against APPNP on ogbn-products (\cref{tab:repair}) show that the benefit is not universal.
Node- or edge-adaptive parameters are therefore a natural extension, ACMP provides a related example of learned attractive and repulsive interactions \citep{wang2025acmpallencahnmessagepassing}.
Finally, PtS does not minimize the full Potts energy on the graph. Propagation is normalised PPR, and we use the propagated distribution as the reference for the local sharpening step, so \cref{prop:descent} is only a local descent guarantee at a single node. 
Empirically, the two PtS steps act on complementary parts of the energy. In the example on WikiCS, propagation reduces the Dirichlet term, while sharpening increases it but strongly reduces the Gini term (\cref{fig:energy_evolve}). After a few iterations, both energy terms stabilize and validation accuracy reaches a plateau, while the propagation without sharpening continues to deteriorate.
The energy formulation also provides a natural route for extending PtS with additional regularizers or constraints. Variational and Potts-based image segmentation methods have incorporated geometric and topological priors such as convexity, connectivity and volume constraints \citep{liu2020convexshapepriordeep,liu2022std,Enforcing_connectivity}. Similar constraints could be carried over to graphs that carry geometric or topological information that can be exploited.

\paragraph{Conclusion}
Propagation over the graph can recover accuracy from degraded features, but its standard safeguard against oversmoothing, restart, reinjects the predictions that have become unreliable.
Splitting the relaxed Potts interaction into  Dirichlet and Gini terms adds an extra safeguard: propagate class probabilities, then sharpen each node.
PtS adds one parameter to APPNP-style propagation, needs only the frozen class distribution and the graph  and costs a few milliseconds per propagation step.
With an MLP backbone, it improves mean test accuracy over independently tuned APPNP by $3.90$ points at $\sigma=2$ and $1.71$ on clean inputs across nine graphs, and removes most of the accuracy drop of deep propagation. Gains are smaller with graph-aware backbones and negligible on heterophilic graphs.
Smoothing across neighbours and decisiveness within nodes can be treated as separate operations, and keeping nodes decisive enables deeper propagation.

\subsection*{AI use statement}

Generative AI tools assisted with editing and polishing the text, including checking the consistency of notation and formula definitions and identifying awkward phrasing and spelling errors. AI tools also assisted with literature search, including identifying LAME as potentially relevant prior work, the main author read and evaluated the original paper before incorporating it into the manuscript and experimental comparison. AI tools were also used for code review and for cleaning up the submitted code for readability, including removing stale experiments not included in the paper, numerical consistency checks, and generating Figures 1 and 5. They were also consulted for feedback on experiments and as discussion partners for exploratory ideas. All AI-assisted material was reviewed and checked for accuracy by the authors, who take responsibility for the final text, code, claims, and results of this work.

\subsection*{Reproducibility statement}
The refinement method is fully specified by \eqref{eq:PtS-update}--\eqref{eq:reaction_with_m},
and derived in \cref{app:full_derivation}; the proofs of \cref{prop:descent} and
\cref{prop:two-community} are given in \cref{app:sharpening} and \cref{app:restart}.
The experimental protocol, including the backbones, corruption model, splits, and
aggregation, is described in \cref{sec:protocol}; dataset statistics are listed in
\cref{tab:datasets}, model sizes in \cref{app:model_sizes}, search spaces in
\cref{tab:search}, and the selected hyperparameters in \cref{app:hyperparameters}.
Baseline objectives and search settings are given in \cref{app:external_baselines}.
Code is available at:  
\ifanonymous
    \url{https://anonymous.4open.science/r/Predict-then-propagate-04C0/}.
\else
    \url{https://github.com/Prebenno/Propagate-Then-Sharpen}.
    \subsection*{Funding}
    P.J.B. is supported by a PhD fellowship funded by the Research Council of Norway through the STIPINST programme (project no. 342607), hosted by NORCE.
\fi

\bibliography{iclr2027_conference}
\bibliographystyle{iclr2027_conference}
\newpage
\FloatBarrier
\appendix

\section*{Appendix overview}

This appendix provides the theoretical derivations, experimental details, and additional analyses supporting the main paper.

\paragraph{Theory and derivations (\crefrange{app:potts_energy}{app:sharpening}).}
We derive the graph energy underlying PtS, its decomposition into Dirichlet and Gini terms, and the categorical sharpening step. We then connect the construction to Potts and LAME and analyze properties of sharpening and repeated propagation, including class-order preservation, local Gini descent, propagation collapse, and the effect of sharpening at increasing depth.

\paragraph{Experimental setup (\crefrange{app:datasets}{app:hyperparameters}).}
We provide full details on the datasets, frozen base models, hyperparameter search spaces, and selected hyperparameters used throughout the experiments.

\paragraph{Additional comparisons (\crefrange{app:external_baselines}{app:ladder}).}
We compare PtS with the external post-hoc baselines LAME-Graph and Graph-TV, evaluate stronger GCN and GraphSAGE backbones, study how the sharpening step can be incorporated into Correct \& Smooth, and report results across the full Gaussian corruption ladder.

\paragraph{Ablations and mechanism (\crefrange{app:mass}{app:per_node}).}
We examine row-mass preservation, propagation depth, the evolution of the Dirichlet and Gini components, performance across nodes with different local homophily, degree, and prediction confidence

\paragraph{Calibration and computational cost (\crefrange{app:runtime}{app:calibration}).}
Finally, we report runtime and computational overhead and evaluate calibration before and after temperature scaling.

%
%
\section{From the continuous Potts energy to the graph energy}
We start from an entropy-regularized Potts formulation and transfer it from a continuous domain to a graph. The derivations show how predictions from a frozen classifier become a KL term, and how the Potts interaction decomposes into a Dirichlet and Gini term. This decomposition motivates combining propagation between nodes with local sharpening of class distributions.

\label[appendix]{app:potts_energy}
Let
\begin{equation*}
    \Delta^{C-1}
    =
    \left\{
    u\in\mathbb{R}^C:
    u_c\geq 0,\;
    \sum_{c=1}^{C}u_c=1
    \right\}
\end{equation*}
denote the probability simplex. In the multiphase formulation, an
assignment field
$v(x)\in\Delta^{C-1}$ associates each position $x \in \Omega$ with a soft class distribution.

Our motivation comes from the entropy-regularized multiphase
formulation in Soft Threshold Dynamics \citep{liu2022std}
and the related two-phase formulation in PottsMGNet
\citep{tai2024pottsmgnet}. With the perimeter-scaling constants
absorbed into $\lambda$, we write the continuous energy as
\begin{equation}
\label{APPNDX:pottsmgnet}
\begin{aligned}
\min_{v:\Omega\to\Delta^{C-1}}\Biggl[
&\int_{\Omega} v(x)^{\top}g(x)\,dx
+\epsilon\int_{\Omega}\sum_{c=1}^{C}
v_c(x)\ln v_c(x)\,dx
\\
&+\frac{\lambda}{2}
\int_{\Omega}\int_{\Omega}
G_{\rho}(x-y)\left(1-v(x)^{\top}v(y)\right)dy\,dx
\Biggr]
\end{aligned}
\end{equation}
Here $\epsilon >0$, $\lambda \geq 0$, and $G_\rho$ is a Gaussian kernel. The last term is an approximation of the Potts boundary penalty.
The first term assigns a cost $g(x)$ to each class at each position, and
the entropy term regularizes the assignments within the simplex, with the
convention $0\log0=0$.

\subsection{Graph analogue}
We replace spatial positions with graph nodes, integrals with
sums, and the Gaussian kernel with a symmetric nonnegative affinity matrix.
\begin{equation*}
    W\in\mathbb{R}^{N\times N}
    \qquad
    W=W^\top
    \qquad
    W_{ij}\geq0.
\end{equation*}

The graph analogue of \cref{APPNDX:pottsmgnet} is
\begin{equation}
\label{APPNDX:Graph_analogue}
\min_{\{u_i\in\Delta^{C-1}\}_{i=1}^{N}}
\left[
\sum_{i}u_i^{\top}g_i
+
\epsilon
\sum_{i,c}
u_{ic}\ln u_{ic}
+
\frac{\lambda}{2}
\sum_{i,j}
W_{ij}
\left(
1-u_i^{\top}u_j
\right)
\right].
\end{equation}
For one-hot assignments $u_i=e_{y_i}$ \[
1-u_i^{\top}u_j
=
\mathbf{1}[y_i\neq y_j].
\]
and the interaction reduces to a weighted Potts regularization that penalizes disagreement between connected nodes.

We obtain the assignment cost from the frozen classifier.
Let $q_i\in\Delta^{C-1}$ denote row $i$ of $Q$, the predicted class distribution at node $i$,
with $q_{ic}>0$ for every class, and choose
$g_{ic}=-\epsilon\log q_{ic}$.

Then the first two terms in \eqref{APPNDX:Graph_analogue} become
\begin{align*}
\sum_i u_i^{\top}g_i
+
\epsilon\sum_{i,c}u_{ic}\log u_{ic}
&=
\epsilon
\sum_{i,c}
u_{ic}
\log\frac{u_{ic}}{q_{ic}}
\\
&=
\epsilon
\sum_i
\operatorname{KL}(u_i\|q_i).
\end{align*}

After dividing by $\epsilon$ and defining
$\gamma=\lambda/\epsilon$, we get
\begin{equation}
\label{eq:graph-potts-kl}
E_{\gamma}(U;Q)
=
\sum_i
\operatorname{KL}(u_i\|q_i)
+
\frac{\gamma}{2}
\sum_{i,j}
W_{ij}
\left(
1-u_i^{\top}u_j
\right).
\end{equation}
The energy therefore combines anchoring to the frozen classifier's predictions with a relaxed Potts interaction between the nodes.

\paragraph{Gini + Dirichlet}
For a symmetric W, we define $d_i = \sum_j W_{ij}$. 
Using the polarization identity
\[
\|u_i-u_j\|^2
=
\|u_i\|^2+\|u_j\|^2-2u_i^\top u_j
\]
and the symmetry of $W$, we get
\begin{equation*}
\frac{1}{2}
\sum_{i,j}
W_{ij}
\left(
1-u_i^\top u_j
\right)
=
\underbrace{
\frac{1}{4}
\sum_{i,j}
W_{ij}
\|u_i-u_j\|^2
}_{E_{\mathrm D}(U)}
+
\underbrace{
\frac{1}{2}
\sum_i
d_i
\left(
1-\|u_i\|^2
\right)
}_{E_{\mathrm G}(U)}.
\end{equation*}

The Dirichlet term $E_{\mathrm D}$ penalizes differences between the class distributions of connected nodes, and is zero when the class distributions are identical on all connected nodes.
The Gini term $E_{\mathrm G}$ penalizes distributions that spread mass over several classes, and is zero when all node distributions are one-hot.

The two terms describe different properties. Two neighbouring nodes can carry identical but uncertain class distributions: their connecting edge is then not penalized by $E_{\mathrm D}$, while both nodes still contribute to $E_{\mathrm G}$.

\section{Derivation of the sharpening step}
\label[appendix]{app:full_derivation}
Here, we derive the sharpening step from a local energy that combines the KL distance to the propagated distribution with a Gini term. The derivation shows why one fixed-point step yields the reaction used in PtS. When $\eta = 0$, the reaction becomes an identity.
The decomposition above gives
\[
E_\gamma(U;Q)
=
\sum_i\operatorname{KL}(u_i\|q_i)
+\gamma E_{\mathrm D}(U)+\gamma E_{\mathrm G}(U).
\]
which motivates treating graph smoothing and local categorical sharpening as separate operations.

PtS uses the probability-space propagation defined in \cref{TRANSPORT},
\begin{equation*}
U^{(0)}=Q,
\qquad
\widetilde U^{(k+1)}
=
\alpha Q+(1-\alpha)SU^{(k)}.
\label{eq:appnp}
\end{equation*}

For a positive propagated row $h_i=\widetilde U_i^{(k+1)}$, let
$m_i=\mathbf 1^\top h_i$ and $p_i=h_i/m_i$. Since the rows of $S$ do not necessarily sum to one, this normalization is what ensures $p_i \in\Delta^{C-1}$.

We hold $p$ fixed and use the assignment cost $g_c = -\log p_c$. Combining this cost with the entropy regularization gives

\begin{equation*}
E(u;p)
=
\sum_cu_c\log\frac{u_c}{p_c}
+\frac\eta2\bigl(1-\|u\|^2\bigr),
\qquad u\in\Delta^{C-1}.
\label{eq:local-reaction-energy}
\end{equation*}
The first term keeps the update close to the propagated distribution $p$ and is zero when $u=p$, while the second term penalizes indecision and is zero when $u$ is one-hot.

We use one global sharpening strength $\eta\geq0$ for all nodes.
This replaces the degree-dependent weighting in the Gini term.
To handle the constraint $\sum_cu_c=1$, we introduce
a Lagrange multiplier $\nu$:
\begin{equation*}
L(u,\nu;p)
=
E(u;p)+\nu\left(\sum_cu_c-1\right).
\end{equation*}
Stationarity in the simplex gives
\begin{equation*}
\frac{\partial\mathcal{L}}{\partial u_c}
=
\log\frac{u_c}{p_c}+1-\eta u_c+\nu
=
0.
\end{equation*}
With $p$ held fixed, the corresponding constrained gradient flow is
\begin{equation*}
    \dot u_c(t)
    =
    -\log\frac{u_c(t)}{p_c}
    -1+\eta u_c(t)-\nu(t),
    \qquad u(0)=p,
\end{equation*}
where $\dot u_c(t)$ denotes the time derivative of $u_c(t)$.
The multiplier $\nu(t)$ is chosen so that
$\sum_{c=1}^{C}\dot u_c(t)=0$.
Since $\sum_{c=1}^{C}p_c=1$, this preserves
$\sum_{c=1}^{C}u_c(t)=1$.

An implicit Euler step of size $\tau>0$ gives
\begin{equation*}
\frac{u_c-p_c}{\tau}
+\log\frac{u_c}{p_c}+1-\eta u_c+\nu=0.
\label{eq:local-implicit-euler}
\end{equation*}
Since the new unknown distribution $u$ appears both in the time-step term and in the gradient, we use a fixed-point iteration similar to the one in
PottsMGNet \citep[Section~5.2]{tai2024pottsmgnet}.
Starting from $v^{(0)}=p$, we evaluate the non-logarithmic terms at
$v^{k}$ and solve for the next iterate:
\begin{equation*}
v^{(k+1)}
=
\softmax\!\left(
\log p+\eta v^{(k)}-\frac{v^{(k)}-p}{\tau}
\right).
\label{eq:local-implicit-picard}
\end{equation*}
We use one inner iteration. Since $v^{(0)}=p$, the last term is zero
and the update becomes
\begin{equation*}
p^{*}_c
=
\frac{p_c\exp(\eta p_c)}{\sum_b p_b\exp(\eta p_b)},
\qquad
p^{*}=\softmax(\log p+\eta p).
\label{eq:categorical-reaction}
\end{equation*}
Because the fixed-point iteration is initialized at $v^{(0)}=p$,
the time-step term vanishes in the first iteration, so the step size $\tau$ does
not appear in the implemented reaction. We use this single iteration rather than solving to convergence, and restore the row mass afterwards:
\[
R_\eta(h_i)
=
m_i\softmax\!\left(\log p_i+\eta p_i\right),
\qquad
\mathbf{1}^\top R_\eta(h_i)=m_i .
\]
The restored mass determines the row's total contribution in the next propagation step. With $\eta=0$ we get $R_0(h_i)=h_i$, so that PtS reduces to PPR-Prob with the same $(\alpha,K)$. The final class distributions are obtained by row-normalizing
$U^{(K)}$.

\section{Relation to Potts and LAME}
\label[appendix]{app:std}
This part shows the connection between the graph energy, the coupled fixed-point update, and LAME-Graph. In the coupled update, the neighbourhood contribution is in the softmax, while the original predictions are kept as the reference. PtS instead uses separate steps for propagation and sharpening, with the propagated distribution as the reference in the local sharpening step. 

For a symmetric $W$, every stationary point of the energy in \cref{eq:graph-potts-kl} must satisfy
\begin{equation}
\label{eq:potts-fixed-point}
u_i
=
\operatorname{softmax}
\left(
\log q_i+\gamma(WU)_i
\right).
\end{equation}
Here the neighbourhood interaction sits inside the softmax, while $q_i$ is kept as the reference. PtS instead propagates first and then uses the propagated distribution $p_i$ as the anchor for the reaction. The Potts decomposition motivates this construction, but it does not imply that the PtS iteration reduces the global energy. We evaluate the coupled iteration \eqref{eq:potts-fixed-point} with $B=\gamma S$ under the same protocol as everything else, reported as LAME-Graph in \cref{tab:external_baselines,tab:baselines-clean}.

\paragraph{Relationship to LAME}

Once the constant term $\frac{\gamma}{2}\sum_{i,j}W_{ij}$ is removed from \cref{eq:graph-potts-kl}, the energy has the same KL-plus-affinity form as the objective of LAME \citep{boudiaf2022lame}, whose update can be written as
\begin{equation*}
u^{(t+1)}_i = \operatorname{softmax}\bigl(\log q_i + (Bu^{(t)})_i\bigr).
\end{equation*}
In the original LAME formulation the affinity matrix $B$ is constructed from pretrained feature representations, using a $k$-nearest-neighbour affinity as well as linear and Gaussian kernels. Setting $B =\gamma W$ recovers the iteration in \cref{eq:potts-fixed-point}, which connects the coupled update to LAME with a graph-derived affinity; this is the LAME-Graph baseline. PtS uses a different update: propagation combines the current state with the original predictions, and a separate mass-preserving reaction then sharpens the propagated distribution $p_i$.

\newpage

\section{Properties of sharpening}
Here we examine both the properties of a single sharpening step and its behaviour under repeated propagation. We show that sharpening keeps the class ordering and decreases local Gini. The propagation analysis shows why the nodes' predictions can collapse without restart, while the example in \cref{fig:class-preservation} shows that even restart is not always sufficient to keep the correct predictions correct. 

\label[appendix]{app:sharpening}

\subsection{Pure propagation collapses}
\label[appendix]{app:collapse}

Assume $\alpha =\eta=0$ and that the graph is connected and undirected, with a self-loop at every node. Let $Q > 0$ have row sums equal to one. Propagation without restart and without sharpening is then
\begin{equation}
    \label{Propagation_no_a_S}
    U^{(K)}=S^KQ .
\end{equation}
From \citet[Proposition~1]{oono2020oversmoothing}, $S$ has a single eigenvalue $1$ while all other eigenvalues have absolute value less than 1, the corresponding unit eigenvector $\phi\propto\widetilde D^{1/2}\mathbf 1$ is strictly positive. Write $v=\phi$, with $v_j$ denoting its entry at node $j$.

Since $S$ is real and symmetric, we can decompose it into its eigenvectors, where $N$ is the number of nodes and $\lambda_1 =1$,
\begin{equation*}
    S^K=\lambda_1^K \phi\phi^\top
    +\sum_{r=2}^{N}\lambda_r^K \phi_r\phi_r^\top.
\end{equation*}
Let $K \to \infty$. Since $|\lambda_r| < 1$ for $r\ge2$ then $\lambda_r^K \to 0$ leaving us with 
\begin{equation}
    \label{Propagation_limit}
    S^K \to \phi\phi^\top
\end{equation}

Let $p_i^{(K)}$ be the final normalised prediction at node i.
\begin{equation*}
\begin{aligned}
    p_i^{(K)}
    &=\frac{U_i^{(K)}}{\sum_c U_{ic}^{(K)}}
    &&\textit{By definition}\\
    &=\frac{(S^KQ)_i}{\sum_c(S^KQ)_{ic}}
    &&\textit{By \eqref{Propagation_no_a_S}}\\
    &\to\frac{(\phi\phi^\top Q)_i}{\sum_c(\phi\phi^\top Q)_{ic}}
    &&\textit{By \eqref{Propagation_limit}}\\
    &=\frac{\phi_i\sum_jv_jQ_j}
    {\phi_i\sum_jv_j\underbrace{\sum_cQ_{jc}}_{=1}}
\end{aligned}
\end{equation*}

Which result in 
\begin{equation*}
    \lim_{K\to\infty}p_i^{(K)}
    =\frac{\sum_jv_jQ_j}{\sum_jv_j}.
\end{equation*}
The limit does not depend on $i$: it is a weighted average of the initial predictions, so every node converges to the same normalised class distribution. Pure probability propagation therefore collapses, as observed empirically in \cref{fig:depth}.

The same argument applies to APPNP at $\alpha=0$, where the propagated state is the logit matrix $Z$ rather than $Q$. Here $S^{K}Z\to vv^{\top}Z$, so node $i$ receives the logit row $\phi_i\,c$ with the common vector $c=\sum_j \phi_jZ_j$. Since $\phi_i>0$, the softmax of $\phi_i c$ has the same maximizing class at every node: the predicted distributions need not coincide, but the predicted \emph{labels} all collapse onto a single class.

\paragraph{With $\alpha \in (0,1]$}
With restart, the iteration converges to the PPNP solution instead \citep{gasteiger2019appnp},
\begin{equation*}
    U^{\infty} = \alpha(I_n-(1-\alpha)S)^{-1}Q ,
\end{equation*}
which depends on $i$ through $Q$, so restart prevents collapse. It does so by reinjecting $Q$ at every step, which is what makes the quality of $Q$ decisive, \cref{prop:two-community} shows that this is not always enough.

\subsection{Restart alone does not preserve a minority community}
\label[appendix]{app:restart}

\begin{proposition}
\label{prop:two-community}
Let $G$ consist of two $9$-node cliques $\mathcal A$ and $\mathcal B$ joined by a perfect matching, so that with added self-loops every node draws $90\%$ of its propagation weight from its own community. Let the true class be $1$ on $\mathcal A$ and $2$ on $\mathcal B$, with frozen predictions $Q_i=(0.9,0.1)$ for $i\in\mathcal A$ and $Q_j=(0.4,0.6)$ for $j\in\mathcal B$, and let $\alpha=0.1$. Then APPNP and PPR-Prob misclassify every node of $\mathcal B$ for every depth $K\ge3$, while PtS with $\eta=4$ classifies every node correctly at every depth.
\end{proposition}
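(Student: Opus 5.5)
The plan is to exploit the regularity and symmetry of the graph so that all three methods reduce to explicit scalar recursions. Every node has $8$ clique neighbours, one matched neighbour and a self-loop, so $\widetilde D=10I$ and $S=\widetilde A/10$ is row-stochastic. Each node puts weight $0.9$ on its own community (self-loop plus clique) and $0.1$ on its matched partner. By induction on $k$, all nodes of $\mathcal A$ share one state and all nodes of $\mathcal B$ share another, for all three methods: $Q$ is constant on each community, $S$ commutes with the community-preserving automorphisms, and $R_\eta$ acts row-wise. Because $S\mathbf 1=\mathbf 1$ and $\mathbf 1^\top Q_i=1$, every propagated row keeps mass $1$. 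Hence no normalisation issues arise, and with two classes each state is a single number.

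\textbf{PPR-Prob.} Let $y_k$ and $x_k$ be the class-$2$ probabilities on $\mathcal A$ and $\mathcal B$. The update gives
\[
x_{k+1}=0.06+0.81x_k+0.09y_k,\qquad y_{k+1}=0.01+0.81y_k+0.09x_k,
\]
with $x_0=0.6$ and $y_0=0.1$. I would decouple this recursion using $s_k=x_k+y_k$ and $d_k=x_k-y_k$:
\[
s_{k+1}=0.07+0.9s_k,\qquad d_{k+1}=0.05+0.72d_k .
\]
Since $s_0=0.7$ is the fixed point, $s_k\equiv0.7$. Meanwhile $d_k$ decreases monotonically from $0.5$ to $0.05/0.28\approx0.179$. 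Now $\mathcal B$ is misclassified iff $x_k=(0.7+d_k)/2<1/2$, i.e.\ iff $d_k<0.3$. Computing $d_1=0.41$, $d_2=0.3452$ and $d_3\approx0.2985<0.3$, monotonicity gives the claim for all $K\ge3$.

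\textbf{APPNP.} I would treat APPNP the same way on logit differences $\delta=z_2-z_1$. Since $S$ is row-stochastic and the recursion is linear, per-row additive shifts of the logits do not affect the final softmax. The initial differences are $\delta_{\mathcal A}=\log(1/9)$ and $\delta_{\mathcal B}=\log 1.5$. Their sum $s$ stays fixed at $\log(1/6)\approx-1.792$, because $s'=0.1s_0+0.9s$ has $s_0$ as its fixed point. The difference $D=\delta_{\mathcal B}-\delta_{\mathcal A}$ obeys $D'=0.1D_0+0.72D$ with $D_0=\log 13.5\approx2.603$, and so decreases monotonically to about $0.93$. Then $\mathcal B$ is misclassified iff $D<1.792$. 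The iterates are $D_1\approx2.134$, $D_2\approx1.797$ and $D_3\approx1.554$, which again gives the claim for every $K\ge3$.

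\textbf{PtS.} Here I would prove by induction the invariant that after each full step the correct-class probability is at least $0.6$ on both communities. This holds at $k=0$. Suppose it holds at step $k$, with masses equal to $1$. After propagation, the correct-class probability on $\mathcal B$ is at least $0.1\cdot0.6+0.81\cdot0.6+0.09\cdot0=0.546$, dropping the nonnegative neighbour contribution. On $\mathcal A$ it is at least $0.1\cdot0.9+0.81\cdot0.6\ge0.546$. Since $R_4$ preserves mass, it then suffices to show $r_4(p)_c\ge0.6$ whenever $p_c\ge0.546$. For two classes, $r_\eta(p)_c$ has odds $\frac{p}{1-p}e^{\eta(2p-1)}$, which is increasing in $p$. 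At $p=0.546$ these odds are $\tfrac{0.546}{0.454}e^{0.368}\approx1.74>1.5$, so $r_4(p)_c\ge0.635$. The final row-normalised $U^{(K)}$ therefore has correct-class probability above $1/2$ everywhere, for every $K$.

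I expect the main obstacle to be bookkeeping rather than ideas. The delicate points are justifying the community-constant reduction and the invariance of logit propagation under row shifts for APPNP, and checking that $d_3$ and $D_3$ fall strictly below their thresholds while monotonicity excludes any later recovery. The closed-form, constant-sum structure is what makes the latter argument clean.
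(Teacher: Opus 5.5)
Your proof is correct and follows the paper's argument essentially step for step: the same sum/difference decoupling with a conserved sum for both PPR-Prob ($d_{k+1}=0.05+0.72d_k$) and APPNP ($D_{k+1}=0.1D_0+0.72D_k$), with the thresholds $0.3$ and $\log 6$ first crossed at $K=3$, and the same PtS invariant that the correct-class probability stays at least $0.6$, via the propagated lower bound $0.546$ and the monotone two-class odds $\frac{p}{1-p}e^{\eta(2p-1)}$. Your explicit remark that per-row logit shifts pass through the linear APPNP recursion is a small point the paper leaves implicit; one minor correction is that the sharpened bound is $0.6347\ldots$, so you should write $\ge 0.634$ rather than $\ge 0.635$, although the argument only needs odds $>1.5$.
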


\begin{figure}[H]
    \centering
\begingroup
\definecolor{ink}{HTML}{20242A}
\definecolor{muted}{HTML}{747B87}
\definecolor{hair}{HTML}{D9DDE3}
\definecolor{redc}{HTML}{C94A5A}
\definecolor{bluec}{HTML}{467AA5}

\definecolor{methodAPPNP}{HTML}{2A78D6}
\definecolor{methodPPR}{HTML}{AF8724}
\definecolor{methodPtS}{HTML}{A33B57}

\tikzset{
  ptsdepth/edge/.style={draw=hair,line width=.32pt},
  ptsdepth/bridge/.style={draw=muted!48,line width=.32pt},
  ptsdepth/arrow/.style={-{Latex[length=1.6mm,width=1.0mm]},
    draw=muted!70,line width=.6pt},
  ptsdepth/stage/.style={font=\sffamily\bfseries\fontsize{9}{10.5}\selectfont,
    text=ink,align=center},
  ptsdepth/tiny/.style={font=\sffamily\fontsize{6.7}{8}\selectfont,
    text=muted,align=center},
  ptsdepth/number/.style={font=\sffamily\fontsize{7.4}{8.7}\selectfont,
    text=ink,align=center}
}

\newcommand{\ptsdepthpie}[1]{%
  \def\radius{.126}
  \pgfmathsetmacro{\endangle}{90+360*(#1)}
  \fill[bluec] (0,0) circle (\radius);
  \fill[redc] (0,0)--(90:\radius)
    arc[start angle=90,end angle=\endangle,radius=\radius]--cycle;
  \draw[white,line width=.42pt] (0,0) circle (\radius);
}

\newcommand{\ptsdepthgraph}[4]{%
  \begin{scope}[shift={(#1,#2)}]
    \foreach \i in {1,...,9}{
      \pgfmathsetmacro{\ang}{360*(\i-1)/9}
      \coordinate (a\i) at ({-.88+.55*cos(\ang)},{.55*sin(\ang)});
      \coordinate (b\i) at ({ .88-.55*cos(\ang)},{.55*sin(\ang)});
    }
    \foreach \i in {1,...,9}{
      \draw[ptsdepth/bridge] (a\i)--(b\i);
    }
    \foreach \i in {1,...,9}{
      \foreach \j in {1,...,9}{
        \ifnum\i<\j\relax
          \draw[ptsdepth/edge] (a\i)--(a\j);
          \draw[ptsdepth/edge] (b\i)--(b\j);
        \fi
      }
    }
    \foreach \i in {1,...,9}{
      \begin{scope}[shift={(a\i)}]\ptsdepthpie{#3}\end{scope}
      \begin{scope}[shift={(b\i)}]\ptsdepthpie{#4}\end{scope}
    }
  \end{scope}
}

\newcommand{\ptsdepthpill}[4]{%
  \begin{scope}[shift={(#1,#2)}]
    \pgfmathsetmacro{\rw}{1.08*(#3)}
    \fill[redc] (-.54,-.065) rectangle ++(\rw,.13);
    \fill[bluec] ({-.54+\rw},-.065) rectangle (.54,.065);
    \draw[hair,line width=.35pt] (-.54,-.065) rectangle (.54,.065);
    \node[ptsdepth/number,anchor=north] at (0,-.105)
      {$\mathcal B$: #4\% blue};
  \end{scope}
}

\noindent\resizebox{\linewidth}{!}{%
\begin{tikzpicture}[x=1cm,y=1cm,line cap=round,line join=round]
  \path[use as bounding box] (-3.75,-6.64) rectangle (13.27,1.14);
  \foreach \x/\k in {0/0,3.9/1,7.8/3,11.7/10}{
    \node[ptsdepth/stage] at (\x,.99) {$K=\k$};
  }
  \node[ptsdepth/stage,anchor=east, text = methodAPPNP] at (-1.90,.02) {APPNP};
  \node[ptsdepth/stage,anchor=east, text =methodPPR] at (-1.90,-2.23) {PPR-Prob};
  \node[ptsdepth/stage,anchor=east,text=methodPtS] at (-1.90,-4.48) {PtS};
  \node[ptsdepth/tiny,anchor=east] at (-1.90,-4.78) {$\eta=4$};
  \foreach \y in {0,-2.25,-4.5}{
    \foreach \x in {0,3.9,7.8}{
      \draw[ptsdepth/arrow] ({\x+1.69},\y)--({\x+2.20},\y);
    }
  }

  \ptsdepthgraph{0}{0}{0.900000000000}{0.400000000000}
  \ptsdepthpill{0}{-0.97}{0.400000000000}{60.0}
  \ptsdepthgraph{3.9}{0}{0.876855365639}{0.457298514411}
  \ptsdepthpill{3.9}{-0.97}{0.457298514411}{54.3}
  \ptsdepthgraph{7.8}{0}{0.841961700866}{0.529680649862}
  \ptsdepthpill{7.8}{-0.97}{0.529680649862}{47.0}
  \ptsdepthgraph{11.7}{0}{0.800905994411}{0.598637923299}
  \ptsdepthpill{11.7}{-0.97}{0.598637923299}{40.1}

  \ptsdepthgraph{0}{-2.25}{0.900000000000}{0.400000000000}
  \ptsdepthpill{0}{-3.22}{0.400000000000}{60.0}
  \ptsdepthgraph{3.9}{-2.25}{0.855000000000}{0.445000000000}
  \ptsdepthpill{3.9}{-3.22}{0.445000000000}{55.5}
  \ptsdepthgraph{7.8}{-2.25}{0.799272000000}{0.500728000000}
  \ptsdepthpill{7.8}{-3.22}{0.500728000000}{49.9}
  \ptsdepthgraph{11.7}{-2.25}{0.745302706461}{0.554697293539}
  \ptsdepthpill{11.7}{-3.22}{0.554697293539}{44.5}

  \ptsdepthgraph{0}{-4.5}{0.900000000000}{0.400000000000}
  \ptsdepthpill{0}{-5.47}{0.400000000000}{60.0}
  \ptsdepthgraph{3.9}{-4.5}{0.990188767793}{0.340538853274}
  \ptsdepthpill{3.9}{-5.47}{0.340538853274}{65.9}
  \ptsdepthgraph{7.8}{-4.5}{0.996951040970}{0.106423863299}
  \ptsdepthpill{7.8}{-5.47}{0.106423863299}{89.4}
  \ptsdepthgraph{11.7}{-4.5}{0.995219425858}{0.008554730508}
  \ptsdepthpill{11.7}{-5.47}{0.008554730508}{99.1}

  \fill[redc] (.28,-6.39) circle (.065);
  \node[ptsdepth/tiny,anchor=west,text=ink] at (.43,-6.39)
    {Left group $\mathcal A$: class 1};
  \fill[bluec] (4.64,-6.39) circle (.065);
  \node[ptsdepth/tiny,anchor=west,text=ink] at (4.79,-6.39)
    {Right group $\mathcal B$: class 2};
  \node[ptsdepth/tiny,anchor=west,text=ink] at (9.00,-6.39)
    {Same $G$, $Q$ and $\alpha=0.1$};
\end{tikzpicture}%
}
\endgroup
    \caption{
        Class probabilities with increasing depth.
        All methods use the same graph, initial predictions,
        and $\alpha=0.1$, PtS uses $\eta=4$.
        The numbers give the correct-class probability
        in group $\mathcal B$.
        Self-loops are included in the updates but omitted
        from the drawing.
    }
    \label{fig:class-preservation}
\end{figure}

Each node is joined to one distinct node in the other group, so after adding self-loops every node has ten connections: nine within its community (eight clique neighbours and itself) and one across. All degrees are equal, so $\widetilde D=10I$ and $S=\widetilde A/10$ is both symmetric and row-stochastic, with weight $0.9$ inside the community and $0.1$ across it. Propagation therefore preserves unit row sums, and so does the reaction, so all three iterations stay on the simplex. By symmetry, all nodes of a community carry the same state at every depth, and it suffices to track one node per community.

\paragraph{PPR-Prob}
Let $a_k$ and $b_k$ be the class-1 probabilities in $\mathcal A$ and $\mathcal B$. The update \eqref{eq:ppr-prob} gives
\[
a_{k+1}=0.09+0.9\,(0.9a_k+0.1b_k),
\qquad
b_{k+1}=0.04+0.9\,(0.1a_k+0.9b_k).
\]
The sum $\Sigma_k=a_k+b_k$ satisfies $\Sigma_{k+1}=0.13+0.9\Sigma_k$ with $\Sigma_0=1.3$, hence $\Sigma_k=1.3$ for all $k$. The difference $\Delta_k=a_k-b_k$ satisfies $\Delta_{k+1}=0.05+0.72\,\Delta_k$ with $\Delta_0=0.5$, hence
\[
\Delta_k=\tfrac{5}{28}+\left(0.5-\tfrac{5}{28}\right)0.72^{\,k},
\]
which decreases monotonically to $5/28\approx0.179$. A node of $\mathcal B$ is misclassified exactly when $b_k=(1.3-\Delta_k)/2>1/2$, that is when $\Delta_k<0.3$. Since $\Delta_2=0.3452$ and $\Delta_3=0.29854$, the nodes of $\mathcal B$ are classified correctly for $K\le2$ and misclassified for every $K\ge3$.

\paragraph{APPNP}
With two classes, the softmax depends only on the logit difference, so let $x_k$ and $y_k$ be the class-1 minus class-2 logit differences in $\mathcal A$ and $\mathcal B$, starting from $x_0=\log 9$ and $y_0=\log\frac23$. APPNP applies the same affine recursion to the logits, so $\Sigma_k=x_k+y_k$ satisfies $\Sigma_{k+1}=0.1\Sigma_0+0.9\Sigma_k$ and stays at $\Sigma_0=\log 6$, while $\Delta_k=x_k-y_k$ satisfies $\Delta_{k+1}=0.1\Delta_0+0.72\Delta_k$ with $\Delta_0=\log 13.5$ and decreases monotonically to $0.1\Delta_0/0.28\approx0.930$. A node of $\mathcal B$ is misclassified exactly when $y_k=(\log 6-\Delta_k)/2>0$, that is when $\Delta_k<\log 6\approx1.7918$. Since $\Delta_2\approx1.7969$ and $\Delta_3\approx1.5541$, the nodes of $\mathcal B$ are again classified correctly for $K\le2$ and misclassified for every $K\ge3$.

\paragraph{PtS}
We claim that for $\eta=4$ every node keeps at least probability $0.6$ on its correct class at every depth. This holds at $k=0$, where the correct-class probabilities are $0.9$ on $\mathcal A$ and $0.6$ on $\mathcal B$. Assume it holds at step $k$ and consider a node $i$ with correct class $y$. Since the rows are distributions, the propagated value satisfies
\[
\widetilde U^{(k+1)}_{iy}
=0.1\,Q_{iy}+0.9\sum_j S_{ij}U^{(k)}_{jy}
\ \ge\
0.1\cdot 0.6+0.9\,(0.9\cdot0.6+0.1\cdot 0)
=0.546 ,
\]
using $Q_{iy}\ge0.6$, the induction hypothesis inside the community, and non-negativity across it. In the two-class case the sharpening map satisfies
\[
\frac{r_\eta(p)_y}{1-r_\eta(p)_y}
=\frac{p_y}{1-p_y}\,e^{\eta(2p_y-1)},
\]
which is increasing in $p_y$, so $\widetilde U^{(k+1)}_{iy}\ge0.546$ implies
\[
U^{(k+1)}_{iy}\ \ge\
\frac{0.546\,e^{4(0.546)}}
     {0.546\,e^{4(0.546)}+0.454\,e^{4(0.454)}}
=0.6347\ldots>0.6 .
\]
The invariant is restored, so by induction every node keeps more than probability $0.6$ on its correct class, and hence is classified correctly, at every depth. \qed

\subsection{Sharpening preserves the class ordering}
\label[appendix]{app:ordering}
From the update, we define
\begin{equation}
    \label{pc_written_out}
    p^*_c = \frac{p_ce^{\eta p_c}}{\sum_bp_be^{\eta p_b}} ,
\end{equation}
where the entries of $p^*$ are positive and sum to one. For any two classes $c$ and $b$,
\begin{equation}
    \frac{p^*_c}{p^*_b} =\frac{p_c}{p_b}\,e^{\eta (p_c-p_b)} .
    \label{eq:ratio}
\end{equation}
Since $\eta\ge0$, the ratio is at least $p_c/p_b$ whenever $p_c\ge p_b$, so sharpening reinforces a node's existing class preference and in particular leaves its predicted class unchanged.

The same identity quantifies how much confidence a single step can restore after propagation has eroded it.

\begin{lemma}[Recovering a confidence level]
\label{lem:restore}
Let $p\in\Delta^{C-1}$ with $p>0$ have a unique leading class $c$ and margin
$\mu=p_c-\max_{b\neq c}p_b>0$. Then
\[
p^{*}_c\ \ge\ \frac{1}{1+(C-1)e^{-\eta\mu}} ,
\]
so for any target confidence $\beta\in(0,1)$ we have $p^{*}_c\ge \beta$ as soon as
$\eta\ \ge\ \mu^{-1}\log\!\bigl((C-1)\beta/(1-\beta)\bigr)$.
In particular $p^{*}_c\to1$ as $\eta\to\infty$.
\end{lemma}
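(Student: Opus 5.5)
The plan is to work directly from the explicit form \eqref{pc_written_out} together with the pairwise ratio identity \eqref{eq:ratio}. Write the leading coordinate as
\[
p^*_c=\frac{1}{1+\sum_{b\neq c} p^*_b/p^*_c}.
\]
The whole task is then to upper-bound each ratio $p^*_b/p^*_c$.

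By \eqref{eq:ratio}, for every $b\neq c$,
\[
\frac{p^*_b}{p^*_c}=\frac{p_b}{p_c}\,e^{-\eta(p_c-p_b)} .
\]
Since $c$ is the unique leading class, $p_b<p_c$, so the first factor is at most $1$. The margin gives $p_c-p_b\ge\mu$, and with $\eta\ge0$ the exponential factor is at most $e^{-\eta\mu}$. Each ratio is therefore at most $e^{-\eta\mu}$. Summing over the $C-1$ competitors and substituting into the expression for $p^*_c$ gives the claimed bound $p^*_c\ge 1/(1+(C-1)e^{-\eta\mu})$.

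For the threshold statement, note that the right-hand side is increasing in $\eta$. Requiring it to be at least $\beta$ is equivalent to $(C-1)e^{-\eta\mu}\le(1-\beta)/\beta$. Taking logarithms and dividing by $\mu>0$ yields $\eta\ge\mu^{-1}\log\bigl((C-1)\beta/(1-\beta)\bigr)$. If this logarithm is negative, the condition holds for every $\eta\ge0$, which is consistent with the statement.

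For the limit, $e^{-\eta\mu}\to0$ as $\eta\to\infty$, so the lower bound tends to $1$. Since also $p^*_c\le1$, we get $p^*_c\to1$.

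None of these steps is a real obstacle. The only point needing care is keeping track of the direction of the inequalities: the factor $p_b/p_c\le1$ is discarded rather than used, which loses a little sharpness but keeps the bound clean. The computation also needs $p>0$ so that the ratios are well defined, which the statement assumes.
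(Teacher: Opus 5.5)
Your proposal is correct and is essentially the paper's proof: bound each ratio $p^*_b/p^*_c=(p_b/p_c)e^{-\eta(p_c-p_b)}$ by $e^{-\eta\mu}$, sum over the $C-1$ competitors to get $1/p^*_c\le 1+(C-1)e^{-\eta\mu}$, and invert this to obtain the threshold on $\eta$. Your explicit squeeze with $p^*_c\le 1$ for the limit and your remark about the case of a negative logarithm are small, harmless additions.
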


\begin{proof}
For $b\neq c$, \eqref{eq:ratio} gives $p^{*}_b/p^{*}_c=(p_b/p_c)e^{-\eta(p_c-p_b)}\le e^{-\eta\mu}$, since $p_b\le p_c$ and $p_c-p_b\ge\mu$. Summing over the $C-1$ competing classes,
\[
\frac{1}{p^{*}_c}=1+\sum_{b\neq c}\frac{p^{*}_b}{p^{*}_c}\le 1+(C-1)e^{-\eta\mu} ,
\]
which is the stated bound. It is at least $\beta$ precisely when $(C-1)e^{-\eta\mu}\le(1-\beta)/\beta$, that is when $\eta\mu\ge\log\bigl((C-1)\beta/(1-\beta)\bigr)$.
\end{proof}

A node whose leading class survives propagation with margin $\mu$ can therefore be returned to any confidence level by a large enough $\eta$, which is the mechanism behind \cref{prop:two-community}. The bound also shows the cost: the same $\eta$ is applied to every node, including nodes whose leading class is wrong.

\subsection{Proof of \cref{prop:descent}}
\label[appendix]{app:descent}

Fix $p\in\Delta^{C-1}$ with $p>0$ and $\eta\ge0$, and recall the local energy \eqref{eq:egp},
\[
E_\eta(u;p)=\operatorname{KL}(u\|p)+\frac{\eta}{2}\bigl(1-\|u\|^2\bigr),
\qquad u\in\Delta^{C-1}.
\]
The KL term is convex in $u$ and the Gini term is concave, so $E_\eta$ is a difference of convex functions. Concavity of $u\mapsto-\frac{\eta}{2}\|u\|^2$ gives the linear upper bound
\[
-\frac{\eta}{2}\|u\|^2\ \le\ -\frac{\eta}{2}\|p\|^2-\eta\,p^{\top}(u-p) ,
\]
with equality at $u=p$. Adding $\operatorname{KL}(u\|p)+\frac{\eta}{2}$ to both sides defines a majorizer
\[
M(u)=\operatorname{KL}(u\|p)-\eta\,p^{\top}u+\text{const} ,
\qquad
E_\eta(u;p)\le M(u),
\qquad
E_\eta(p;p)=M(p).
\]
By the Gibbs variational principle, $M$ is minimized over the simplex at
$u=\softmax(\log p+\eta p)=r_\eta(p)=p^{*}$, which is exactly the sharpening step \eqref{eq:sharpen-simplex}. Therefore
\[
E_\eta(p^{*};p)\ \le\ M(p^{*})\ \le\ M(p)\ =\ E_\eta(p;p) ,
\]
which proves the descent claim. Writing it out, $\operatorname{KL}(p^{*}\|p)+\frac{\eta}{2}(1-\|p^{*}\|^2)\le\frac{\eta}{2}(1-\|p\|^2)$, that is
\[
\frac{\eta}{2}\bigl(\|p^{*}\|^{2}-\|p\|^{2}\bigr)\ \ge\ \operatorname{KL}(p^{*}\|p)\ \ge\ 0 ,
\]
so for $\eta>0$ the Gini impurity $1-\|p\|^2$ does not increase, and it strictly decreases unless $p^{*}=p$. The ordering claim is \cref{app:ordering}. The same majorization argument applied at an arbitrary iterate $v$ instead of $p$ shows that the full fixed-point iteration $v\mapsto\softmax(\log p+\eta v)$ is a concave--convex procedure, so every inner iteration, not only the first, decreases $E_\eta(\cdot\,;p)$. \qed

\newpage

\section{Datasets}
\label[appendix]{app:datasets}
\cref{tab:datasets} summarizes the nine graphs in the main board and the heterophilic control datasets. The datasets vary in nodes, edges, homophily, and classes. The control datasets are reported separately to show how the methods act when neighbouring nodes often belong to separate classes.
\begin{table}[H]
\centering
\caption{Dataset statistics. Edges are the reported undirected-pair counts, and $h$ denotes edge homophily.}
\label{tab:datasets}
\begin{tabular}{@{}lrrrrr@{}}
\toprule
Dataset & Nodes & Edges & Features & Classes & $h$ \\
\midrule
WikiCS & 11,701 & 215,603 & 300 & 10 & 0.65 \\
Cora-TAPE & 2,708 & 5,278 & 768 & 7 & 0.81 \\
PubMed-TAPE & 19,717 & 44,324 & 768 & 3 & 0.80 \\
TAPE-Arxiv23 & 46,198 & 38,863 & 300 & 40 & 0.64 \\
ogbn-arxiv & 169,343 & 1,157,799 & 128 & 40 & 0.65 \\
ogbn-products & 2,449,029 & 61,859,012 & 100 & 47 & 0.81 \\
Ele-Photo & 48,362 & 436,891 & 768 & 12 & 0.74 \\
Ele-Computers & 87,229 & 628,274 & 768 & 10 & 0.82 \\
Books-History & 41,551 & 251,590 & 768 & 12 & 0.64 \\
\midrule
Roman-Empire & 22,662 & 32,927 & 300 & 18 & 0.05 \\
Amazon-Ratings & 24,492 & 93,050 & 300 & 5 & 0.38 \\
\bottomrule
\end{tabular}
\end{table}

\section{Frozen backbones}
\label{app:backbones}
\Cref{tab:model_sizes} shows the architecture of the frozen base models that are used to produce the predictions. The MLP uses node features alone, while GCN and GraphSAGE use the graph structure as well. PtS is applied after these models have been trained and does not add any new trainable parameters to the base models. The MLP has architecture $d\rightarrow256\rightarrow256\rightarrow C$, with BatchNorm, ReLU, and dropout ($0.5$) after each hidden linear layer, and is trained with Adam (learning rate $0.01$, weight decay $5\times10^{-4}$) for at most 500 epochs with early-stopping patience 100.

\label[appendix]{app:model_sizes}
\begin{table}[H]
\centering\small
\caption{Number of parameters of each backbone, with the feature dimension $D$ and the
number of classes $C$. The last column gives the depth and width of the graph backbones.}
\label{tab:model_sizes}
\begin{tabular}{@{}lrrrrrl@{}}
\toprule
Dataset & $D$ & $C$ & MLP & GCN & GraphSAGE
& Conv.\ layers $\times$ width \\
\midrule
WikiCS         & 300 & 10 & 146,442 & 80,138  & 159,498 & $2\times256$ \\
Cora-TAPE      & 768 & 7  & 265,479 & 199,175 & 397,575 & $2\times256$ \\
PubMed-TAPE    & 768 & 3  & 264,451 & 198,147 & 395,523 & $2\times256$ \\
TAPE-Arxiv23   & 300 & 40 & 154,152 & 87,848  & 174,888 & $2\times256$ \\
ogbn-arxiv     & 128 & 40 & 110,120 & 110,120 & 218,664 & $3\times256$ \\
ogbn-products  & 100 & 47 & 104,751 & 36,015  & 71,215  & $3\times128$ \\
Ele-Photo      & 768 & 12 & 266,764 & 200,460 & 400,140 & $2\times256$ \\
Ele-Computers  & 768 & 10 & 266,250 & 199,946 & 399,114 & $2\times256$ \\
Books-History  & 768 & 12 & 266,764 & 200,460 & 400,140 & $2\times256$ \\
\midrule
Roman-Empire   & 300 & 18 & 148,498 & 82,194  & 163,602 & $2\times256$ \\
Amazon-Ratings & 300 & 5  & 145,157 & 78,853  & 156,933 & $2\times256$ \\
\bottomrule
\end{tabular}
\end{table}

\clearpage

\section{Hyperparameter search spaces}
\label[appendix]{app:search}

\Cref{tab:search} shows the search spaces and budget for the propagation methods and variants of correct and smooth. We select hyperparameters using validation accuracy, with a separate Optuna trial per method. APPNP and PPR-Prob choose the restart weight $\alpha$ and number of propagation steps $K$, while PtS also searches over sharpening strength $\eta$. The search space also includes $\eta = 0$, so validation can choose to turn off the reaction.

\begin{table}[H]
\centering
\caption{Search spaces and maximum numbers of hyperparameter evaluations per unit and severity. All methods use Optuna TPE except Graph-TV, which uses a grid search. The PtS search includes the exact sharpening off setting $\eta=0$ alongside the logarithmic range. For hyperparameter tuning on ogbn-products, the upper limits are reduced to $K=50$ and $T=50$.}
\label{tab:search}
\begin{tabular}{@{}llr@{}}
\toprule
Method & Search & Budget \\
\midrule
APPNP / PPR & $\alpha\in[0,1]$, $K\in\{1,\ldots,100\}$ & 250 \\
PtS & Same, $\eta=0$ or $\log_{10}\eta\in[-2,2.408]$ & 250 \\
LAME-Graph & $T\in\{1,\ldots,100\}$, $\gamma=0$ or $\log_{10}\gamma\in[-2,2.408]$ & 250 \\
Graph-TV & $\lambda=0$ or $\log_{10}\lambda\in\{-3,\ldots,3\}$ (25 log-spaced values), $\varepsilon=1$ fixed & $\le 26$ \\
C\&S & $\alpha_c,\alpha_s\in[0,1]$ & 250 \\
C\&S-PtS & Same, plus the PtS reaction search & 250 \\
\bottomrule
\end{tabular}
\end{table}

\section{Selected hyperparameters (MLP)}
\label[appendix]{app:hyperparameters}

\begin{table}[H]
\caption{Median selected values over the units of each (graph, severity), where
$K$ is the number of propagation steps, $\alpha$ the restart probability, $\eta$ the
sharpening strength, and ``off'' the percentage of units for which the search selected
the exact sharpening off setting $\eta=0$.
}
\label{tab:hyperparameters}
\centering
\begingroup
\small
\setlength{\tabcolsep}{4pt}
\renewcommand{\arraystretch}{0.95}
\setlength{\LTpre}{4pt}
\setlength{\LTpost}{0pt}
\begin{longtable}{@{}llrrrrrrr@{}}
\toprule
Dataset & $\sigma$
& $K_{\mathrm{APPNP}}$
& $K_{\mathrm{PPR}}$
& $K_{\mathrm{PtS}}$
& $\alpha_{\mathrm{APPNP}}$
& $\alpha_{\mathrm{PtS}}$
& $\eta_{\mathrm{PtS}}$
& off (\%) \\
\midrule
\endhead

WikiCS & 0 & 2 & 2 & 3 & 0.11 & 0.09 & 4.11 & 7 \\
 & 0.5 & 2 & 2 & 4 & 0.08 & 0.06 & 6.40 & 8 \\
 & 1 & 2 & 3 & 4 & 0.04 & 0.04 & 6.96 & 3 \\
 & 1.5 & 2 & 2 & 6 & 0.03 & 0.03 & 12.86 & 0 \\
 & 2 & 2 & 3 & 8 & 0.02 & 0.00 & 34.68 & 0 \\
\addlinespace[1pt]

Cora-TAPE & 0 & 3 & 6 & 54 & 0.06 & 0.13 & 2.24 & 0 \\
 & 0.5 & 3 & 6 & 38 & 0.05 & 0.12 & 1.78 & 6 \\
 & 1 & 4 & 6 & 34 & 0.03 & 0.10 & 1.42 & 7 \\
 & 1.5 & 4 & 7 & 34 & 0.03 & 0.08 & 1.35 & 4 \\
 & 2 & 5 & 6 & 38 & 0.02 & 0.06 & 1.46 & 7 \\
\addlinespace[1pt]

PubMed-TAPE & 0 & 30 & 28 & 46 & 0.32 & 0.31 & 1.51 & 20 \\
 & 0.5 & 26 & 24 & 52 & 0.24 & 0.24 & 1.33 & 30 \\
 & 1 & 4 & 7 & 15 & 0.11 & 0.15 & 0.82 & 21 \\
 & 1.5 & 4 & 7 & 22 & 0.05 & 0.13 & 1.13 & 10 \\
 & 2 & 5 & 8 & 24 & 0.03 & 0.10 & 1.61 & 6 \\
\addlinespace[1pt]

TAPE-Arxiv23 & 0 & 26 & 4 & 58 & 0.35 & 0.30 & 0.91 & 27 \\
 & 0.5 & 7 & 8 & 26 & 0.26 & 0.21 & 0.17 & 39 \\
 & 1 & 4 & 8 & 16 & 0.12 & 0.09 & 0.00 & 53 \\
 & 1.5 & 4 & 10 & 14 & 0.04 & 0.05 & 0.15 & 32 \\
 & 2 & 3 & 9 & 29 & 0.08 & 0.05 & 0.63 & 8 \\
\addlinespace[1pt]

ogbn-arxiv & 0 & 2 & 2 & 3 & 0.12 & 0.26 & 6.80 & 0 \\
 & 0.5 & 2 & 2 & 3 & 0.08 & 0.23 & 15.96 & 0 \\
 & 1 & 2 & 3 & 7 & 0.03 & 0.20 & 6.70 & 0 \\
 & 1.5 & 2 & 2 & 3 & 0.03 & 0.06 & 6.77 & 0 \\
 & 2 & 1 & 2 & 2 & 0.03 & 0.02 & 0.00 & 56 \\
\addlinespace[1pt]

ogbn-products & 0 & 2 & 2 & 17 & 0.13 & 0.36 & 3.34 & 0 \\
 & 0.5 & 2 & 2 & 4 & 0.04 & 0.31 & 42.74 & 0 \\
 & 1 & 2 & 1 & 1 & 0.41 & 0.27 & 0.00 & 56 \\
 & 1.5 & 2 & 2 & 2 & 0.48 & 0.33 & 0.00 & 78 \\
 & 2 & 2 & 2 & 1 & 0.47 & 0.29 & 0.00 & 67 \\
\addlinespace[1pt]

Ele-Photo & 0 & 1 & 1 & 1 & 0.02 & 0.03 & 1.70 & 37 \\
 & 0.5 & 1 & 1 & 1 & 0.02 & 0.07 & 6.59 & 22 \\
 & 1 & 1 & 1 & 2 & 0.01 & 0.03 & 7.80 & 22 \\
 & 1.5 & 1 & 2 & 2 & 0.01 & 0.00 & 9.37 & 19 \\
 & 2 & 1 & 2 & 3 & 0.02 & 0.00 & 8.73 & 13 \\
\addlinespace[1pt]

Ele-Computers & 0 & 2 & 3 & 10 & 0.01 & 0.24 & 6.17 & 0 \\
 & 0.5 & 2 & 3 & 7 & 0.00 & 0.10 & 7.27 & 0 \\
 & 1 & 2 & 3 & 8 & 0.00 & 0.03 & 11.10 & 0 \\
 & 1.5 & 3 & 3 & 9 & 0.00 & 0.01 & 17.22 & 1 \\
 & 2 & 3 & 4 & 12 & 0.00 & 0.00 & 23.25 & 0 \\
\addlinespace[1pt]

Books-History & 0 & 1 & 1 & 41 & 0.42 & 0.42 & 3.37 & 20 \\
 & 0.5 & 1 & 1 & 15 & 0.30 & 0.34 & 3.26 & 13 \\
 & 1 & 1 & 2 & 5 & 0.18 & 0.24 & 3.87 & 7 \\
 & 1.5 & 2 & 3 & 8 & 0.10 & 0.16 & 3.65 & 7 \\
 & 2 & 2 & 3 & 14 & 0.06 & 0.09 & 4.77 & 2 \\
\addlinespace[1pt]

Roman-Empire & 0 & 45 & 37 & 60 & 0.98 & 0.97 & 0.00 & 53 \\
 & 0.5 & 42 & 59 & 53 & 0.98 & 0.92 & 0.12 & 39 \\
 & 1 & 46 & 62 & 52 & 1.00 & 0.93 & 0.09 & 41 \\
 & 1.5 & 56 & 51 & 50 & 1.00 & 0.90 & 2.49 & 34 \\
 & 2 & 52 & 42 & 63 & 1.00 & 0.84 & 0.11 & 40 \\
\addlinespace[1pt]

Amazon-Ratings & 0 & 40 & 8 & 36 & 0.55 & 0.26 & 0.00 & 70 \\
 & 0.5 & 54 & 3 & 4 & 0.40 & 0.17 & 0.00 & 73 \\
 & 1 & 32 & 36 & 45 & 0.35 & 0.18 & 0.00 & 88 \\
 & 1.5 & 51 & 45 & 56 & 0.29 & 0.21 & 0.00 & 66 \\
 & 2 & 47 & 44 & 47 & 0.18 & 0.20 & 0.00 & 57 \\

\bottomrule
\end{longtable}
\addtocounter{table}{-1}
\endgroup
\end{table}

Across multiple graphs, hyperparameter tuning chose more propagation steps for PtS than for PPR-Prob (\cref{tab:hyperparameters}). The chosen sharpening strength varies across datasets and noise levels, and in some cases it is completely turned off. This suggests that sharpening is not always helpful. The depth also needs to be interpreted with the restart weight in mind, a large $K$ can have a limited interpretation when $\alpha$ is close to 1. We calculate medians separately for each hyperparameter.

%
%

\newpage

\section{External post-hoc baselines}
\label[appendix]{app:external_baselines}

Both baselines refine the same frozen predictions $Q$ on the same observed graph and evaluation units as PtS. Validation labels are used only for hyperparameter selection. The adaptations, graph operators and selection procedures are specified below.

\paragraph{LAME-Graph.}
LAME~\citep{boudiaf2022lame} refines a frozen classifier's output using a KL fidelity term and an agreement term weighted by affinities computed from pretrained feature representations. We replace these affinities by the self-looped graph operator $S$
and initialize $U^{(0)}=Q$. The update is
\[
U^{(t+1)}
=
\operatorname{RowSoftmax}
\bigl(\log Q+\gamma S U^{(t)}\bigr).
\]
The original predictions therefore remain the reference throughout the iteration, while the graph interaction enters inside the softmax. This is the coupled Potts update described in
\cref{app:std}.

We select the interaction strength $\gamma$ and a fixed iteration count $T$ by validation accuracy using 250 Optuna TPE trials per unit, severity and noise draw. The search uses $T\in\{1,\ldots,100\}$ and $\log_{10}\gamma\in[-2,2.408]$ for positive strengths, together with an explicit identity option that returns $Q$ exactly. We run the selected number of iterations rather than using the original energy-based stopping rule.

\paragraph{Graph-TV.}
We adapt the TV-regularized softmax of
\citet{yang2025regularizing} to frozen predictions by minimizing
\[
\sum_i \operatorname{KL}(u_i\|q_i)
+\lambda\,\mathrm{TV}_G(U),
\qquad u_i\in\Delta^{C-1}.
\]
The KL term keeps predictions close to $Q$, while TV penalizes disagreement across the graph. We use their classwise neighbourhood TV with loopless, degree-normalised graph weights, setting the input to $\log Q$ and $\varepsilon=1$.
Setting $\lambda=0$ returns $Q$.

We select $\lambda$ by validation accuracy from at most 26 grid candidates (\cref{tab:search}). Each dual solve uses at most 3000 iterations, and only candidates with relative primal--dual gap at most $10^{-3}$ are eligible for selection. Graph-TV is evaluated at $\sigma\in\{0,2\}$ on eight graphs, excluding ogbn-products because of memory requirements.

\begin{table}[H]
\centering
\caption{Comparison of post-hoc graph refinement methods at $\sigma = 2$, including the LAME-Graph and the Graph-TV methods.}
\label{tab:external_baselines}
\resizebox{\textwidth}{!}{%
\begin{tabular}{lrrrrrr}
\toprule
Dataset 
& $Q$ 
& APPNP 
& PPR-Prob 
& LAME-Graph 
& Graph-TV 
& PtS \\
\midrule
WikiCS & 46.36 & 67.97 & 68.98 & 67.80 & 65.10 & \textbf{72.15} \\
Cora-TAPE & 43.82 & 68.16 & 70.47 & 63.94 & 68.72 & \textbf{72.30} \\
PubMed-TAPE & 64.56 & 78.87 & 80.22 & 73.87 & 79.64 & \textbf{80.73} \\
TAPE-Arxiv23 & 28.56 & 35.63 & 38.26 & 35.26 & 36.54 & \textbf{38.91} \\
ogbn-arxiv & 22.65 & 33.86 & 40.43 & 40.38 & 35.25 & \textbf{40.62} \\
ogbn-products & 22.46 & \textbf{28.54} & 27.24 & 27.47 & -- & 27.33 \\
Ele-Photo & 44.15 & 55.97 & 61.00 & 59.58 & 57.06 & \textbf{61.43} \\
Ele-Computers & 38.05 & 59.94 & 66.85 & 62.45 & 59.64 & \textbf{69.56} \\
Books-History & 66.91 & 78.11 & 78.90 & 77.66 & 78.90 & \textbf{79.13} \\
\addlinespace[2pt]\multicolumn{7}{@{}l}{\scriptsize\textit{heterophilic controls}} \\
Roman-Empire & 19.34 & 19.30 & 19.30 & 19.29 & \textbf{19.31} & 19.30  \\
Amazon-Ratings & 34.17 & 36.74 & \textbf{36.80} & 36.73 & 36.73 & 36.78 \\
\midrule
 Mean (9 main) & {41.95} & {56.34} & {59.15} & {56.49} & {--} & {60.24} \\
{Mean (8 main, no products)} & {44.38} & {59.81} & {63.14} & {60.12} & {60.11} & {64.35}  \\
\bottomrule
\end{tabular}
}
\end{table}

\begin{table}[H]
\centering\small
\caption{External $Q{+}G$ baselines on clean features ($\sigma=0$), including the LAME-Graph and the Graph-TV methods.}
\label{tab:baselines-clean}
\resizebox{\linewidth}{!}{%
\begin{tabular}{@{}lrrrrrr@{}}
\toprule
Dataset
& $Q$
& APPNP
& PPR-Prob
& LAME-Graph
& Graph-TV
& PtS \\
\midrule
WikiCS & 72.67 & 77.78 & 78.24 & 78.18 & 78.18 & \textbf{78.42} \\
Cora-TAPE & 60.51 & 76.16 & 78.42 & 76.78 & 76.82 & \textbf{79.74} \\
PubMed-TAPE & 81.59 & 84.35 & 84.96 & 85.05 & 85.11 & \textbf{85.18}  \\
TAPE-Arxiv23 & 67.59 & 69.53 & 69.68 & \textbf{69.82} & 69.81 & 69.72  \\
ogbn-arxiv & 55.95 & 66.10 & 65.96 & \textbf{67.53} & 65.80 & 66.56 \\
ogbn-products & 59.58 & 70.58 & 71.32 & 71.40 & -- & \textbf{72.17} \\
Ele-Photo & 66.83 & 71.87 & 74.43 & 74.84 & 72.03 & \textbf{74.91}  \\
Ele-Computers & 60.96 & 75.37 & 78.71 & 78.32 & 76.47 & \textbf{80.11}\\
Books-History & 81.38 & 82.59 & 82.87 & 82.95 & \textbf{82.97} & 82.88  \\
\addlinespace[2pt]\multicolumn{7}{@{}l}{\scriptsize\textit{heterophilic controls}} \\
Roman-Empire & 65.54 & \textbf{65.65} & 65.54 & 65.55 & 65.57 & 65.54 \\
Amazon-Ratings & 49.52 & 51.63 & \textbf{52.77} & 52.05 & 52.36 & 52.74 \\
\midrule
{Mean (9 main)} & {67.45} & {74.93} & {76.06} & {76.10} & {--} & {76.63}\\
{Mean (8 main, no products)} & {68.43} & {75.47} & {76.66} & {76.68} & {75.90} & \textbf{77.19}  \\
\bottomrule
\end{tabular}
}
\end{table}

\Cref{tab:external_baselines,tab:baselines-clean} shows the results at $\sigma = 2$ and $\sigma =0$. The comparison with Graph-TV averages over 8 graphs without products, as it requires too much memory. PtS has a higher average accuracy in these comparisons, but is not best on all datasets. On clean data, LAME-Graph is better on the ogbn-arxiv, TAPE-Arxiv23, and Books-History datasets.

\section{Full decomposition}

\Cref{tab:decomp} gives the per-dataset values for the decomposition summarised in \cref{tab:decomp_compressed}, at $\sigma = 2$ with the frozen MLP
\label[appendix]{app:decomp_full}
\begin{table}[H]
\centering\small

\caption{Decomposition of PtS$-$APPNP at $\sigma=2$ into the change from
moving propagation into probability space, $\Delta_{\mathrm{space}}$
and the change from adding the reaction,
$\Delta_{\mathrm{PtS}}$. APPNP, PPR-Prob, and PtS are tuned independently. Reaction OFF sets $\eta=0$ at the $(\alpha,K)$ selected for PtS, without retuning}
\label{tab:decomp}
\begin{tabular}{@{}lrrrrrr@{}}
\toprule
Dataset & APPNP & PPR-Prob & PtS & Reaction OFF
& $\Delta_{\mathrm{space}}$ {\scriptsize PPR$-$APPNP}
& $\Delta_{\mathrm{PtS}}$ {\scriptsize PtS$-$PPR} \\
\midrule
WikiCS & 67.97 & 68.98 & 72.15 & 53.81 & +1.01 $\pm$ 1.59 & +3.17 $\pm$ 0.91 \\
Cora-TAPE & 68.16 & 70.47 & 72.30 & 64.83 & +2.31 $\pm$ 1.90 & +1.83 $\pm$ 0.65 \\
PubMed-TAPE & 78.87 & 80.22 & 80.73 & 76.80 & +1.35 $\pm$ 1.25 & +0.51 $\pm$ 0.27 \\
TAPE-Arxiv23 & 35.63 & 38.26 & 38.91 & 36.90 & +2.64 $\pm$ 0.28 & +0.65 $\pm$ 0.37 \\
ogbn-arxiv & 33.86 & 40.43 & 40.62 & 39.95 & +6.57 $\pm$ 1.71& +0.18 $\pm$ 0.22 \\
ogbn-products & 28.54 & 27.24 & 27.33 & 27.33 & -1.30 $\pm$ 0.22 & +0.09 $\pm$ 0.15 \\
Ele-Photo & 55.97 & 61.00 & 61.43 & 59.55 & +5.03 $\pm$ 0.75 & +0.42 $\pm$ 0.41 \\
Ele-Computers & 59.94 & 66.85 & 69.56 & 58.13 & +6.91 $\pm$ 0.84 & +2.72 $\pm$ 0.64 \\
Books-History & 78.11 & 78.90 & 79.13 & 77.11 & +0.79 $\pm$ 0.47 & +0.23 $\pm$ 0.19 \\
\midrule
\textbf{Mean (9 main)} & \textbf{56.34} & \textbf{59.15} & \textbf{60.24} & \textbf{54.93} & \textbf{+2.81} & \textbf{+1.09} \\
\bottomrule

\end{tabular}
\end{table}

Under independent validation selection and severe corruption $\sigma = 2$, moving propagation from logit space to probability space accounts for $2.81$ percentage points of the mean gain, and adding sharpening for a further $1.09$ (\cref{tab:decomp}). The split between the two varies by graph: on WikiCS it is $+1.01$ and $+3.17$, while on ogbn-arxiv it is $+6.57$ and $+0.18$. In the Reaction OFF column, we keep the $(\alpha, K)$ selected for PtS and set $\eta$ to zero without new tuning. Mean accuracy then falls to 54.93\%, below APPNP itself, so the propagation configurations selected for PtS are only viable in combination with sharpening: PtS generally selects deeper propagation than PPR-Prob (\cref{app:hyperparameters}). \Cref{fig:noise} shows how the mean gains change with corruption severity, with dataset-level results in \cref{tab:full-gaussian-ladder}. When hyperparameters are selected on clean data and evaluated at $\sigma=2$, PtS retains mean gains of $2.28$ percentage points over APPNP in the nine main graphs
(\cref{fig:transfer}).

\newpage

\section{Stronger backbones}
\label[appendix]{app:stronger_backbones}
We also examine whether PtS improves when frozen predictions come from base models that already use the graph structure. We apply the same method to predictions from GCN and GraphSAGE, with model sizes given in \cref{tab:model_sizes}. The comparison explores how much additional improvement we can get when the base model has already exploited neighbourhood information.

\subsection{GCN Backbone}

\begin{table}[H]
\centering
\caption{Accuracy (\%) with the frozen GCN backbone. $\Delta$ is PtS minus APPNP in percentage points. Bold marks the highest displayed accuracy in each condition.}
\label{tab:backbone-gcn}
\small
\setlength{\tabcolsep}{3pt}
\renewcommand{\arraystretch}{1.08}
\begin{tabular*}{\linewidth}{@{\extracolsep{\fill}}lrrrr@{\hspace{10pt}}rrrr@{}}
\toprule
& \multicolumn{4}{c}{Clean ($\sigma=0$)}
& \multicolumn{4}{c}{Noisy ($\sigma=2$)} \\
\cmidrule(lr){2-5}\cmidrule(lr){6-9}
Dataset & $Q(GCN)$ & APPNP & PtS & $\Delta$
        & $Q(GCN)$ & APPNP & PtS & $\Delta$ \\
\midrule
WikiCS & 79.08 & 79.05 & \textbf{79.40} & +0.35 $\pm$ 0.21 & 76.73 & 76.91 & \textbf{77.68} & +0.77 $\pm$ 0.30 \\
Cora-TAPE & 81.96 & 82.37 & \textbf{83.10} & +0.73 $\pm$ 0.49 & 76.20 & 78.34 & \textbf{79.94} & +1.60 $\pm$ 0.61 \\
PubMed-TAPE & 85.77 & 85.74 & \textbf{85.81} & +0.06 $\pm$ 0.05 & 80.41 & 82.27 & \textbf{82.73} & +0.46 $\pm$ 0.17 \\
TAPE-Arxiv23 & \textbf{69.28} & 69.26 & 69.26 & -0.00 $\pm$ 0.02 & 43.43 & 44.04 & \textbf{44.33} & +0.29 $\pm$ 0.11 \\
ogbn-arxiv & 71.83 & 71.90 & \textbf{72.48} & +0.58 $\pm$ 0.28 & 62.73 & 64.66 & \textbf{64.78} & +0.12 $\pm$ 0.33 \\
ogbn-products & 76.55 & 77.24 & \textbf{77.73} & +0.49 $\pm$ 0.17 & 63.81 & 69.16 & \textbf{71.01} & +1.86 $\pm$ 1.10 \\
Ele-Photo & 81.33 & 81.42 & \textbf{81.87} & +0.45 $\pm$ 0.18 & 78.47 & 78.94 & \textbf{79.67} & +0.73 $\pm$ 0.22 \\
Ele-Computers & 84.03 & 84.61 & \textbf{85.60} & +0.99 $\pm$ 0.14 & 77.19 & 80.58 & \textbf{82.49} & +1.91 $\pm$ 0.41 \\
Books-History & \textbf{82.68} & 82.66 & 82.66 & -0.00 $\pm$ 0.03 & 80.04 & 80.27 & \textbf{80.44} & +0.16 $\pm$ 0.12 \\
\midrule
Roman-Empire & \textbf{45.72} & 45.70 & 45.70 & +0.00 $\pm$ 0.02 & \textbf{20.38} & 20.36 & 20.35 & -0.01 $\pm$ 0.02 \\
Amazon-Ratings & \textbf{50.02} & 50.00 & 50.00 & -0.01 $\pm$ 0.05 & 37.22 & \textbf{37.93} & 37.82 & -0.11 $\pm$ 0.24 \\
\midrule
Mean (9 main) & {79.17} & {79.36} & {79.77} & {+0.41} & {71.00} & {72.80} & {73.67} & {+0.88} \\
\bottomrule
\end{tabular*}

\end{table}

With a GCN backbone, performance on clean data improves incrementally on many graphs, but PtS achieves higher accuracy than APPNP on all nine main graphs at $\sigma =2$ (\cref{tab:backbone-gcn}) and on seven out of nine graphs on clean graphs.

\subsection{GraphSAGE backbone}
\begin{table}[H]
\centering
\caption{Accuracy (\%) with the frozen GraphSAGE backbone. $\Delta$ is PtS minus APPNP in percentage points. Bold marks the highest displayed accuracy in each condition.}
\label{tab:backbone-graphsage}
\small
\setlength{\tabcolsep}{3pt}
\renewcommand{\arraystretch}{1.08}
\begin{tabular*}{\linewidth}{@{\extracolsep{\fill}}lrrrr@{\hspace{10pt}}rrrr@{}}
\toprule
& \multicolumn{4}{c}{Clean ($\sigma=0$)}
& \multicolumn{4}{c}{Noisy ($\sigma=2$)} \\
\cmidrule(lr){2-5}\cmidrule(lr){6-9}
Dataset & $Q(SAGE)$ & APPNP & PtS & $\Delta$
        & $Q(SAGE)$ & APPNP & PtS & $\Delta$ \\
\midrule
WikiCS & 78.72 & 79.08 & \textbf{79.41} & +0.33 $\pm$ 0.18 & 71.06 & 75.12 & \textbf{76.38} & +1.27 $\pm$ 0.43 \\
Cora-TAPE & 73.54 & 74.76 & \textbf{75.96} & +1.20 $\pm$ 1.17 & 68.18 & 70.89 & \textbf{72.69} & +1.80 $\pm$ 0.95 \\
PubMed-TAPE & 84.10 & 84.47 & \textbf{84.80} & +0.33 $\pm$ 0.18 & 72.26 & 78.63 & \textbf{80.04} & +1.41 $\pm$ 0.77 \\
TAPE-Arxiv23 & 70.04 & 70.20 & \textbf{70.35} & +0.15 $\pm$ 0.09 & 39.67 & 41.59 & \textbf{43.09} & +1.50 $\pm$ 0.44 \\
ogbn-arxiv & 71.85 & 72.17 & \textbf{72.43} & +0.26 $\pm$ 0.06 & 50.00 & 55.30 & \textbf{57.44} & +2.14 $\pm$ 0.94 \\
ogbn-products & 78.18 & 78.34 & \textbf{78.37} & +0.04 $\pm$ 0.11 & 39.56 & 39.61 & \textbf{39.69} & +0.07 $\pm$ 0.13 \\
Ele-Photo & 79.44 & 79.53 & \textbf{80.16} & +0.63 $\pm$ 0.26 & 76.38 & 76.81 & \textbf{77.43} & +0.62 $\pm$ 0.22 \\
Ele-Computers & 79.71 & 80.96 & \textbf{82.23} & +1.28 $\pm$ 0.30 & 73.70 & 77.80 & \textbf{79.92} & +2.12 $\pm$ 0.36 \\
Books-History & 81.86 & 81.85 & \textbf{81.90} & +0.05 $\pm$ 0.05 & 79.68 & 80.14 & \textbf{80.28} & +0.14 $\pm$ 0.14 \\
\midrule
Roman-Empire & \textbf{80.00} & 79.99 & 79.97 & -0.03 $\pm$ 0.05 & \textbf{21.05} & 21.01 & 21.01 & -0.00 $\pm$ 0.05 \\
Amazon-Ratings & 55.07 & 55.38 & \textbf{55.49} & +0.11 $\pm$ 0.15 & 33.29 & \textbf{37.84} & 37.64 & -0.20 $\pm$ 0.17 \\
\midrule
{Mean (9 main)} & {77.49} & {77.93} & {78.40} & {+0.47} & {63.39} & {66.21} & {67.44} & {+1.23} \\
\bottomrule
\end{tabular*}
\end{table}

With GraphSAGE, PtS achieves higher accuracy than APPNP on the nine main homophilic graphs at the selected noise levels \cref{tab:backbone-graphsage}. The gain varies across backbones and increases with noise level. For example, the improvement under noise is more prominent on ogbn-arxiv and Ele-Computers, while it is smaller on ogbn-products.

\section{Correct and Smooth comparison}
\label[appendix]{app:correct_smooth}
Correct \& Smooth uses known labels in its correction and smoothing stages. We evaluate sharpening within this label-assisted pipeline, leaving correction unchanged and modifying only smoothing. The C\&S-PtS variant tests whether the introduced sharpening adds gain over other methods that use APPNP propagation.

\begin{table}[H]
\centering
\small
\caption{Comparison of Correct and Smooth (C\&S) with C\&S-PtS, where the categorical sharpening step is applied after each smoothing step. All methods use an MLP backbone. Subscripts give the
corruption severity, and Gain is C\&S-PtS minus C\&S at $\sigma=2$ in percentage points.}
\label{tab:correct_smooth}

\begin{tabular}{@{}lrrrrr@{}}
\toprule
Dataset & C\&S$_0$ & C\&S-PtS$_0$ & C\&S$_2$ & C\&S-PtS$_2$ & Gain (pp) \\
\midrule
WikiCS         & 76.90 & 78.82 & 70.06 & 75.43 & +5.38 $\pm$ 0.85 \\
Cora-TAPE      & 87.38 & 87.23 & 86.18 & 86.00 & -0.17 $\pm$ 0.20 \\
PubMed-TAPE    & 86.97 & 86.98 & 84.44 & 84.44 & 0.00 $\pm$ 0.05 \\
TAPE-Arxiv23   & 70.32 & 70.31 & 45.54 & 45.55 & +0.02 $\pm$ 0.04 \\
ogbn-arxiv     & 70.91 & 71.00 & 68.35 & 68.61 & +0.26 $\pm$ 0.30 \\
ogbn-products  & 77.97 & 78.67 & 75.08 & 76.73 & +1.65 $\pm$ 0.53 \\
Ele-Photo      & 87.26 & 87.31 & 85.80 & 86.13 & +0.33 $\pm$ 0.09 \\
Ele-Computers  & 90.68 & 90.73 & 89.87 & 90.08 & +0.21 $\pm$ 0.08 \\
Books-History  & 84.53 & 84.53 & 82.74 & 82.81 & +0.07 $\pm$ 0.04 \\
\midrule
Roman-Empire   & 64.63 & 64.67 & 19.50 & 19.51 & +0.01 $\pm$ 0.07 \\
Amazon-Ratings & 53.57 & 53.53 & 45.76 & 46.20 & +0.44 $\pm$ 0.29 \\
\bottomrule
\end{tabular}
\end{table}

\Cref{tab:correct_smooth} shows the results on clean data and under noise, while \cref{tab:full-gaussian-ladder-cs} shows the entire noise ladder. The improvement is especially large on WikiCS and ogbn-products, while many other datasets show small differences. Sharpening can therefore also be effective here, but with dataset-dependent effects.

\section{Full Gaussian ladder}
\Cref{tab:full-gaussian-ladder} shows the results at all the five levels of gaussian noise. Each method chooses hyperparameters on the same noise levels that they are evaluated on.
\label[appendix]{app:ladder}

\begin{table}[H]
\centering
\caption{
Accuracy across the full Gaussian corruption ladder.
APPNP propagates logits, PPR-Prob isolates probability-space propagation,
and PtS additionally applies the categorical sharpening step.
}
\label{tab:full-gaussian-ladder}

\small
\renewcommand{\arraystretch}{1.08}
\setlength{\tabcolsep}{5pt}

\begin{tabular}{@{}llrrrrr@{}}
\toprule
Dataset & Method
& $\sigma=0$
& $\sigma=0.5$
& $\sigma=1$
& $\sigma=1.5$
& $\sigma=2$ \\
\midrule

WikiCS
& APPNP    & 77.78 & 77.12 & 75.33 & 72.19 & 67.97 \\
& PPR-Prob & 78.24 & 77.48 & 75.69 & 73.09 & 68.98 \\
& PtS      & \textbf{78.42} & \textbf{77.61} & \textbf{76.14} & \textbf{74.38} & \textbf{72.15} \\
\addlinespace[3pt]

Cora-TAPE
& APPNP    & 76.16 & 75.60 & 73.87 & 71.43 & 68.16 \\
& PPR-Prob & 78.42 & 77.92 & 76.28 & 73.52 & 70.47 \\
& PtS      & \textbf{79.74} & \textbf{78.95} & \textbf{77.32} & \textbf{75.21} & \textbf{72.30} \\
\addlinespace[3pt]

PubMed-TAPE
& APPNP    & 84.35 & 83.59 & 81.98 & 80.35 & 78.87 \\
& PPR-Prob & 84.96 & 84.17 & 82.76 & 81.46 & 80.22 \\
& PtS      & \textbf{85.18} & \textbf{84.39} & \textbf{83.05} & \textbf{81.84} & \textbf{80.73} \\
\addlinespace[3pt]

TAPE-Arxiv23
& APPNP    & 69.53 & \textbf{65.29} & \textbf{55.35} & 44.87 & 35.63 \\
& PPR-Prob & 69.68 & 65.27 & 55.32 & 45.87 & 38.26 \\
& PtS      & \textbf{69.72} & 65.28 & 55.33 & \textbf{45.90} & \textbf{38.91} \\
\addlinespace[3pt]

ogbn-arxiv
& APPNP    & 66.10 & 64.53 & 58.09 & 45.50 & 33.86 \\
& PPR-Prob & 65.96 & 65.03 & 62.72 & 52.24 & 40.43 \\
& PtS      & \textbf{66.56} & \textbf{65.92} & \textbf{63.04} & \textbf{53.93} & \textbf{40.62} \\
\addlinespace[3pt]

ogbn-products
& APPNP    & 70.58 & 56.83 & \textbf{38.51} & \textbf{31.64} & \textbf{28.54} \\
& PPR-Prob & 71.32 & \textbf{57.66} & 37.26 & 30.27 & 27.24 \\
& PtS      & \textbf{72.17} & 57.12 & 37.28 & 30.34 & 27.33 \\
\addlinespace[3pt]

Ele-Photo
& APPNP    & 71.87 & 70.39 & 66.70 & 61.25 & 55.97 \\
& PPR-Prob & 74.43 & 73.31 & 70.02 & 65.39 & 61.00 \\
& PtS      & \textbf{74.91} & \textbf{73.46} & \textbf{70.18} & \textbf{65.97} & \textbf{61.43} \\
\addlinespace[3pt]

Ele-Computers
& APPNP    & 75.37 & 74.14 & 71.17 & 66.25 & 59.94 \\
& PPR-Prob & 78.71 & 77.95 & 75.21 & 71.46 & 66.85 \\
& PtS      & \textbf{80.11} & \textbf{79.26} & \textbf{77.07} & \textbf{73.74} & \textbf{69.56} \\
\addlinespace[3pt]

Books-History
& APPNP    & 82.59 & 82.20 & 81.11 & 79.67 & 78.11 \\
& PPR-Prob & 82.87 & 82.47 & 81.41 & 80.21 & 78.90 \\
& PtS      & \textbf{82.88} & \textbf{82.49} & \textbf{81.47} & \textbf{80.31} & \textbf{79.13} \\

\midrule
\multicolumn{7}{@{}l}{\scriptsize{Heterophilic controls}} \\
\addlinespace[2pt]

Roman-Empire
& APPNP    & \textbf{65.65} & 53.84 & \textbf{36.19} & \textbf{25.11} & \textbf{19.30} \\
& PPR-Prob & 65.54 & \textbf{53.88} & \textbf{36.19} & 25.10 & \textbf{19.30} \\
& PtS      & 65.54 & 53.87 & 36.18 & 25.10 & \textbf{19.30} \\
\addlinespace[3pt]

Amazon-Ratings
& APPNP    & 51.63 & 44.20 & 38.70 & \textbf{37.09} & 36.74 \\
& PPR-Prob & \textbf{52.77} & \textbf{45.27} & \textbf{38.81} & 37.06 & \textbf{36.80} \\
& PtS      & 52.74 & 45.20 & \textbf{38.81} & 37.05 & 36.78 \\

\bottomrule
\end{tabular}
\end{table}

\begin{table}[H]
\centering
\caption{
Correct \& Smooth across the full Gaussian corruption ladder.
C\&S-PtS applies the same categorical sharpening step during the smoothing stage.
}
\label{tab:full-gaussian-ladder-cs}

\small
\renewcommand{\arraystretch}{1.08}
\setlength{\tabcolsep}{5pt}

\begin{tabular}{@{}llrrrrr@{}}
\toprule
Dataset & Method
& $\sigma=0$
& $\sigma=0.5$
& $\sigma=1$
& $\sigma=1.5$
& $\sigma=2$ \\
\midrule

WikiCS
& C\&S     & 76.90 & 75.64 & 73.14 & 71.19 & 70.06 \\
& C\&S-PtS & \textbf{78.82} & \textbf{78.27} & \textbf{77.32} & \textbf{76.31} & \textbf{75.43} \\
\addlinespace[3pt]

Cora-TAPE
& C\&S     & \textbf{87.38} & \textbf{87.14} & \textbf{86.81} & \textbf{86.54} & \textbf{86.18} \\
& C\&S-PtS & 87.23 & 87.05 & 86.66 & 86.42 & 86.00 \\
\addlinespace[3pt]

PubMed-TAPE
& C\&S     & 86.97 & \textbf{86.39} & \textbf{85.48} & \textbf{84.89} & \textbf{84.44} \\
& C\&S-PtS & \textbf{86.98} & 86.38 & \textbf{85.48} & 84.88 & \textbf{84.44} \\
\addlinespace[3pt]

TAPE-Arxiv23
& C\&S     & \textbf{70.32} & 66.31 & 57.91 & \textbf{50.56} & 45.54 \\
& C\&S-PtS & 70.31 & \textbf{66.32} & \textbf{57.92} & \textbf{50.56} & \textbf{45.55} \\
\addlinespace[3pt]

ogbn-arxiv
& C\&S     & 70.91 & 69.99 & 68.98 & 68.49 & 68.35 \\
& C\&S-PtS & \textbf{71.00} & \textbf{70.07} & \textbf{69.16} & \textbf{68.71} & \textbf{68.61} \\
\addlinespace[3pt]

ogbn-products
& C\&S     & 77.97 & 76.55 & 75.84 & 75.26 & 75.08 \\
& C\&S-PtS & \textbf{78.67} & \textbf{78.07} & \textbf{77.25} & \textbf{77.04} & \textbf{76.73} \\
\addlinespace[3pt]

Ele-Photo
& C\&S     & 87.26 & 87.04 & 86.62 & 86.20 & 85.80 \\
& C\&S-PtS & \textbf{87.31} & \textbf{87.11} & \textbf{86.73} & \textbf{86.40} & \textbf{86.13} \\
\addlinespace[3pt]

Ele-Computers
& C\&S     & 90.68 & 90.57 & 90.35 & 90.08 & 89.87 \\
& C\&S-PtS & \textbf{90.73} & \textbf{90.65} & \textbf{90.47} & \textbf{90.26} & \textbf{90.08} \\
\addlinespace[3pt]

Books-History
& C\&S     & \textbf{84.53} & \textbf{84.33} & 83.78 & 83.24 & 82.74 \\
& C\&S-PtS & \textbf{84.53} & 84.32 & \textbf{83.84} & \textbf{83.27} & \textbf{82.81} \\

\midrule
\multicolumn{7}{@{}l}{\scriptsize\textit{Heterophilic controls}} \\
\addlinespace[2pt]

Roman-Empire
& C\&S     & 64.63 & \textbf{53.00} & \textbf{35.98} & \textbf{25.22} & 19.50 \\
& C\&S-PtS & \textbf{64.67} & 52.96 & 35.95 & \textbf{25.22} & \textbf{19.51} \\
\addlinespace[3pt]

Amazon-Ratings
& C\&S     & \textbf{53.57} & \textbf{48.18} & 46.47 & 45.98 & 45.76 \\
& C\&S-PtS & 53.53 & \textbf{48.18} & \textbf{46.74} & \textbf{46.33} & \textbf{46.20} \\

\bottomrule
\end{tabular}
\end{table}

\newpage

%
%
\section{Preserving mass}
The row mass chooses how strongly a node's class distribution influences the next propagation step. In \cref{tab:mass-preservation}, we test whether rescaling this mass helps after sharpening, compared with normalizing each row sum to one. The comparison also includes propagation without sharpening so that we can examine the effect of row normalization in both cases.

\label[appendix]{app:mass}
We compare PtS with a variant that resets each row's mass to one after every iteration, $R_\eta^{\mathrm{rn}}(h)=r_\eta(h/m(h))$. Its sharpening-off counterpart, PPR-rn, applies row normalisation after every probability propagation step.
\begin{table}[H]
\centering\small
\caption{Effect of preserving the propagated row mass. $\Delta_{\mathrm{PtS}}$ is
PtS minus PtS-rn and $\Delta_{\mathrm{PPR}}$ is PPR-Prob minus PPR-rn, both computed
within (split, seed, draw). Bold marks the higher of the two accuracies in each row.}
\label{tab:mass-preservation}
\begin{tabular}{@{}lrrrr@{}}
\toprule
Dataset & PtS-rn & PtS 
        & $\Delta_{\mathrm{PtS}}$
        & $\Delta_{\mathrm{PPR}}$ \\
\midrule
\multicolumn{5}{@{}l}{\textbf{Clean ($\sigma=0$)}} \\
\addlinespace[3pt]
WikiCS
& 78.24 & \textbf{78.42}
& $+0.18 \pm 0.27$ & $+0.15 \pm 0.27$ \\
Cora-TAPE
& 79.15 & \textbf{79.74}
& $+0.59 \pm 0.66$ & $+0.17 \pm 0.58$ \\
PubMed-TAPE
& 85.12 & \textbf{85.18}
& $+0.07 \pm 0.16$ & $+0.03 \pm 0.08$ \\
TAPE-Arxiv23
& \textbf{69.75} & 69.72
& $-0.03 \pm 0.06$ & $-0.08 \pm 0.04$ \\
ogbn-arxiv
& \textbf{67.45} & 66.56
& $-0.89 \pm 0.34$ & $-1.11 \pm 0.32$ \\
ogbn-products
& 71.93 & \textbf{72.17}
& $+0.24 \pm 0.11$ & $+0.60 \pm 0.05$ \\
Ele-Photo
& \textbf{74.92} & 74.91
& $-0.01 \pm 0.30$ & $-0.34 \pm 0.47$ \\
Ele-Computers
& 79.67 & \textbf{80.11}
& $+0.45 \pm 0.15$ & $+0.01 \pm 0.33$ \\
Books-History
& \textbf{82.90} & 82.88
& $-0.02 \pm 0.17$ & $+0.02 \pm 0.07$ \\
\addlinespace[3pt]
\textbf{Mean (9 main)}
& 76.57 & \textbf{76.63}
& $\mathbf{+0.06}$ & $\mathbf{-0.06}$ \\
\addlinespace[5pt]
\multicolumn{5}{@{}l}{Heterophilic controls} \\
\addlinespace[2pt]
Roman-Empire
& \textbf{65.54} & \textbf{65.54}
& $+0.00 \pm 0.02$ & $+0.00 \pm 0.01$ \\
Amazon-Ratings
& 52.69 & \textbf{52.74}
& $+0.05 \pm 0.14$ & $+0.02 \pm 0.15$ \\

\midrule
\multicolumn{5}{@{}l}{\textbf{Corrupted ($\sigma=2$)}} \\
\addlinespace[3pt]
WikiCS
& 69.66 & \textbf{72.15}
& $+2.48 \pm 0.42$ & $+1.01 \pm 0.42$ \\
Cora-TAPE
& 71.24 & \textbf{72.30}
& $+1.06 \pm 0.60$ & $+0.50 \pm 0.46$ \\
PubMed-TAPE
& 80.63 & \textbf{80.73}
& $+0.10 \pm 0.13$ & $+0.14 \pm 0.17$ \\
TAPE-Arxiv23
& 38.69 & \textbf{38.91}
& $+0.22 \pm 0.16$ & $+0.19 \pm 0.10$ \\
ogbn-arxiv
& \textbf{41.68} & 40.62
& $-1.07 \pm 0.29$ & $-0.17 \pm 1.11$ \\
ogbn-products
& \textbf{27.44} & 27.33
& $-0.12 \pm 0.14$ & $-0.26 \pm 0.26$ \\
Ele-Photo
& 60.73 & \textbf{61.43}
& $+0.70 \pm 0.55$ & $+0.40 \pm 0.32$ \\
Ele-Computers
& 67.42 & \textbf{69.56}
& $+2.14 \pm 0.37$ & $+1.10 \pm 0.49$ \\
Books-History
& 79.09 & \textbf{79.13}
& $+0.04 \pm 0.09$ & $+0.04 \pm 0.06$ \\
\addlinespace[3pt]
\textbf{Mean (9 main)}
& 59.62 & \textbf{60.24}
& $\mathbf{+0.61}$ & $\mathbf{+0.33}$ \\
\addlinespace[5pt]
\multicolumn{5}{@{}l}{Heterophilic controls} \\
\addlinespace[2pt]
Roman-Empire
& 19.29 & \textbf{19.30}
& $+0.00 \pm 0.04$ & $+0.00 \pm 0.03$ \\
Amazon-Ratings
& \textbf{36.79} & 36.78
& $-0.02 \pm 0.07$ & $-0.01 \pm 0.06$ \\
\bottomrule
\end{tabular}
\end{table}

The differences are small on many noise-free datasets while mass preservation gives larger improvements under noise, especially on WikiCS and Ele-Computers. The effect is still not positive everywhere, on ogbn-arxiv, the variant with row normalization is better in both. This means mass preservation influences the results but does not guarantee higher accuracy.

\section{Depth sensitivity}
\label{app:depth}
\Cref{tab:depth-values} gives the complete values for the depth analysis in \cref{fig:depth}. Logit-Sharp updates the logits as $L\leftarrow L+\eta\,\softmax(L)$ after each APPNP propagation step, with final predictions $\softmax(L)$. We vary propagation steps $K$, while keeping $\alpha$ and $\eta$ fixed. Without restart, APPNP and PPR-Prob are pure diffusion methods, so they oversmooth and lose substantial accuracy at greater depths, while strong sharpening reduces this loss substantially. Restart dampens this decrease, but PtS keeps its advantage at greater depths here. A larger $\eta$ does not always help: $\eta =200$ is best at large depth without restart, while $\eta = 16$ is better with restart at $\sigma=2$.

\begin{table}[H]
\centering
\caption{Depth sensitivity under clean and corrupted inputs.}
\label{tab:depth-values}
\begin{tabular}{@{}lrrrrrrrr@{}}
\toprule
Curve & $K{=}1$ & $K{=}2$ & $K{=}3$ & $K{=}5$ & $K{=}10$ & $K{=}20$ & $K{=}40$ & $K{=}100$ \\
\midrule
\multicolumn{9}{@{}l@{}}{\textit{$\sigma=0$, $\alpha=0$ (no restart)}} \\
frozen $Q$             & 67.5 & 67.5 & 67.5 & 67.5 & 67.5 & 67.5 & 67.5 & 67.5 \\
APPNP                  & 74.5 & 74.3 & 73.4 & 70.7 & 64.8 & 58.2 & 50.2 & 40.5 \\
Logit-Sharp $\eta{=}16$ & 74.5 & 74.9 & 74.7 & 74.3 & 72.7 & 68.0 & 60.7 & 52.9 \\
PPR-Prob               & 75.1 & 75.8 & 75.2 & 73.3 & 68.5 & 61.4 & 53.7 & 44.6 \\
PtS $\eta{=}16$        & 75.1 & 75.8 & 75.8 & 75.3 & 74.4 & 73.5 & 71.9 & 70.6 \\
PtS $\eta{=}200$       & 75.1 & 75.4 & 75.4 & 75.1 & 74.7 & 74.2 & 73.5 & 73.2 \\
\addlinespace[2pt]
\multicolumn{9}{@{}l@{}}{\textit{$\sigma=0$, $\alpha=0.1$ (restart)}} \\
frozen $Q$             & 67.5 & 67.5 & 67.5 & 67.5 & 67.5 & 67.5 & 67.5 & 67.5 \\
APPNP                  & 74.3 & 74.5 & 74.1 & 73.0 & 71.4 & 70.6 & 70.4 & 70.4 \\
Logit-Sharp $\eta{=}16$ & 74.3 & 75.1 & 75.0 & 74.6 & 73.3 & 70.2 & 64.6 & 61.0 \\
PPR-Prob               & 74.8 & 75.6 & 75.6 & 75.0 & 74.0 & 73.6 & 73.5 & 73.5 \\
PtS $\eta{=}16$        & 74.8 & 75.9 & 76.1 & 75.8 & 75.3 & 74.9 & 74.5 & 74.2 \\
PtS $\eta{=}200$       & 74.8 & 75.7 & 75.7 & 75.5 & 75.2 & 75.0 & 74.9 & 74.8 \\
\addlinespace[2pt]
\multicolumn{9}{@{}l@{}}{\textit{$\sigma=2$, $\alpha=0$ (no restart)}} \\
frozen $Q$             & 41.9 & 41.9 & 41.9 & 41.9 & 41.9 & 41.9 & 41.9 & 41.9 \\
APPNP                  & 54.0 & 55.8 & 55.5 & 53.8 & 50.3 & 45.9 & 40.4 & 34.7 \\
Logit-Sharp $\eta{=}16$ & 54.0 & 56.0 & 56.4 & 56.4 & 55.6 & 53.0 & 48.5 & 42.5 \\
PPR-Prob               & 54.2 & 57.7 & 58.5 & 57.8 & 54.4 & 50.2 & 44.5 & 37.2 \\
PtS $\eta{=}16$        & 54.2 & 58.1 & 58.8 & 59.4 & 59.0 & 58.1 & 57.0 & 56.1 \\
PtS $\eta{=}200$       & 54.2 & 57.7 & 58.0 & 58.1 & 58.7 & 58.5 & 57.9 & 57.7 \\
\addlinespace[2pt]
\multicolumn{9}{@{}l@{}}{\textit{$\sigma=2$, $\alpha=0.1$ (restart)}} \\
frozen $Q$             & 41.9 & 41.9 & 41.9 & 41.9 & 41.9 & 41.9 & 41.9 & 41.9 \\
APPNP                  & 53.5 & 55.2 & 55.4 & 54.7 & 53.5 & 52.8 & 52.6 & 52.6 \\
Logit-Sharp $\eta{=}16$ & 53.5 & 55.8 & 56.4 & 56.5 & 56.0 & 54.2 & 51.1 & 48.0 \\
PPR-Prob               & 53.1 & 55.6 & 56.3 & 56.3 & 55.5 & 55.0 & 54.8 & 54.8 \\
PtS $\eta{=}16$        & 53.1 & 57.1 & 57.9 & 58.1 & 58.2 & 57.8 & 57.3 & 57.0 \\
PtS $\eta{=}200$       & 53.1 & 56.8 & 57.2 & 57.1 & 56.7 & 56.4 & 56.2 & 56.1 \\
\bottomrule
\end{tabular}
\end{table}

\section{Dirichlet and Gini}
\Cref{fig:energy_evolve} shows how the energy components and validation accuracy evolve through the PtS iteration on WikiCS. We use the configuration $\alpha = 0.1$, $\eta = 16$, and $\sigma = 2$ to illustrate what happens before and after propagation and sharpening.

\begin{figure}[H]
    \centering
    \includegraphics[width=1\linewidth]{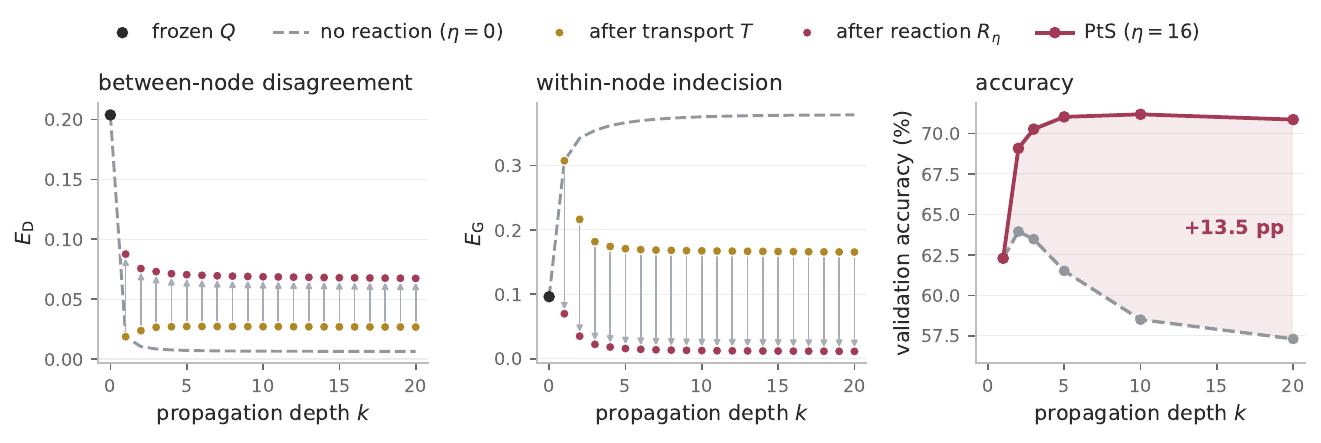}
    \caption{Evolution of the Dirichlet energy $E_{\mathrm D}$ (left), Gini energy $E_{\mathrm G}$ (middle), and validation accuracy (right) under gaussian feature corruption at $\sigma =2$. Gold and red markers show values after propagation and sharpening. Black markers denote the frozen predictions $Q$. Propagation  reduces disagreement between nodes, but increases indecision within nodes, while sharpening has the opposite effect}
    \label{fig:energy_evolve}
\end{figure}

In the visualization, we can see that propagation reduces the Dirichlet term and increases the Gini term, while sharpening has the opposite effect. Still, PtS maintains higher validation accuracy at greater depth than the variant without reaction. Since sharpening preserves each node's predicted class, its effect must occur through subsequent propagation steps. \Cref{prop:descent} guarantees local energy descent under sharpening.

\section{Per-node performance}
\label[appendix]{app:per_node}
We start from a strong homophily prior: neighbouring nodes should be similar. We group nodes by local homophily, degree quintile, and by the confidence $\max_c Q_{ic}$. \Cref{tab:per_node_performance} shows the test-accuracy difference between PtS and APPNP in each group, using backbone seed $0$ for each split and noise draw $0$ at $\sigma=2$, with the hyperparameters selected in the main experiment.

We can see that PtS gives the greatest improvement in groups with higher local homophily. Most interestingly, PtS helps most in uncertain areas, the accuracy increase is also higher at nodes with lower prediction confidence. This suggests that PtS is most useful when the original predictions are uncertain, and the neighbourhood provides relevant class information.

\begin{table}[H]
\centering
\caption{
PtS$-$APPNP accuracy difference in percentage points, with nodes stratified by local
homophily, by degree quintile, and by the confidence of the frozen prediction.
}
\label{tab:per_node_performance}

\small
\renewcommand{\arraystretch}{1.08}
\setlength{\tabcolsep}{5pt}

\begin{tabular}{@{}llrr@{}}
\toprule
Node stratum & Bucket 
& PtS$-$APPNP ($\sigma{=}0$) 
& PtS$-$APPNP ($\sigma{=}2$) \\
\midrule
Local homophily 
 & 0-.2 & -1.29 & -1.75 \\
 & .2-.4 & -0.51 & -0.35 \\
 & .4-.6 & +1.88 & +4.83 \\
 & .6-.8 & +3.69 & +9.12 \\
 & .8-1 & +1.90 & +5.76 \\
 & isolated & +0.00 & +0.00 \\
\addlinespace[2pt]
Degree quintile & Q1 & +1.27 & +3.64 \\
 & Q2 & +1.63 & +3.56 \\
 & Q3 & +1.86 & +3.77 \\
 & Q4 & +1.95 & +4.41 \\
 & Q5 & +2.04 & +5.66 \\
\addlinespace[2pt]
Anchor confidence quintile & Q1 & +3.41 & +5.12 \\
 & Q2 & +2.30 & +4.66 \\
 & Q3 & +1.60 & +4.39 \\
 & Q4 & +0.95 & +3.87 \\
 & Q5 & +0.48 & +3.00 \\
\bottomrule
\end{tabular}
\end{table}

\newpage

%
%
\section{Runtime and overhead}
\label[appendix]{app:runtime}
\Cref{tab:runtime} shows the runtime for the base models forward pass, as well as the post hoc processing at both 1 and 100 steps. We can see that, generally, for all methods, the runtime is very low.

\begin{table}[H]
\centering
\small
\setlength{\tabcolsep}{3pt}
\renewcommand{\arraystretch}{1.08}
\caption{Runtime and overhead: MLP forward-pass and post-hoc call
times in milliseconds at $\sigma=2$}
\label{tab:runtime}

\resizebox{\linewidth}{!}{%
\begin{tabular}{@{}llrrrrrrrr@{}}
\toprule
& & MLP
& \multicolumn{3}{c}{Per step (ms)}
& \multicolumn{3}{c}{$K=100$ (ms)}
& PtS/APPNP \\
\cmidrule(lr){4-6}\cmidrule(lr){7-9}
Dataset & Device & fwd (ms)
& APPNP & PPR-Prob & PtS
& APPNP & PPR-Prob & PtS
& per step \\
\midrule
WikiCS         & CPU  & 23.0 & 1.00 & 2.02 & 4.72 & 100   & 202   & 472   & 4.7 \\
Cora-TAPE      & CPU  & 5.4  & 0.09 & 0.44 & 1.31 & 8.8   & 43.6  & 131   & 14.9 \\
PubMed-TAPE    & CPU  & 59.0 & 0.18 & 0.90 & 3.08 & 17.7  & 89.8  & 308   & 17.4 \\
TAPE-Arxiv23   & CPU  & 87.6 & 5.45 & 12.1 & 30.7 & 545   & 1,211 & 3,068 & 5.6 \\
ogbn-arxiv     & A100 & 9.9  & 0.90 & 1.69 & 3.85 & 90.0  & 169   & 385   & 4.3 \\
ogbn-products  & A100 & 161  & 30.7 & 36.7 & 54.3 & 3,072 & 3,669 & 5,428 & 1.8 \\
Ele-Photo      & CPU  & 169  & 1.98 & 3.87 & 9.73 & 198   & 387   & 973   & 4.9 \\
Ele-Computers  & CPU  & 217  & 4.03 & 6.55 & 15.6 & 403   & 655   & 1,564 & 3.9 \\
Books-History  & CPU  & 138  & 1.98 & 3.29 & 9.19 & 198   & 329   & 919   & 4.6 \\
\midrule
Roman-Empire   & CPU  & 41.7 & 0.41 & 1.64 & 5.61 & 40.8  & 164   & 561   & 13.8 \\
Amazon-Ratings & CPU  & 45.2 & 0.28 & 0.93 & 3.10 & 27.8  & 92.7  & 310   & 11.2 \\
\bottomrule
\end{tabular}%
}
\end{table}

\section{Calibration}
\label[appendix]{app:calibration}
Sharpening can make class distributions more concentrated, but directly influencing probabilities can harm the model's calibration. We therefore examine the negative log-likelihood (NLL) and expected calibration error (ECE) before and after temperature scaling using validation nodes. 

\begin{table}[H]
\centering\small
\caption{Calibration of the frozen prediction, APPNP and PtS with an MLP backbone, before
and after temperature scaling fitted on validation nodes. $T$ is the fitted temperature and
accuracy drift is the change in accuracy caused by scaling. Values are means over WikiCS, Cora-TAPE, PubMed-TAPE, TAPE-arxiv23, ogbn-arxiv and ogbn-products}
\label{calibration}
\begin{tabular}{@{}llrrrrrr@{}}
\toprule
$\sigma$ & Method & $T$ & raw NLL & scaled NLL & raw ECE & scaled ECE & acc.\ drift (pp) \\
\midrule
0 & Frozen $Q$ & 1.34 & 1.226 & 1.132 & 0.088 & 0.040 & 0 \\
 & APPNP & 0.84 & 0.983 & 0.907 & 0.129 & 0.037 & 0 \\
 & PtS & 2.11 & 1.591 & 0.932 & 0.158 & 0.050 & 0 \\
2 & Frozen $Q$ & 3.55 & 3.514 & 2.112 & 0.302 & 0.047 & 0 \\
 & APPNP & 1.47 & 2.108 & 1.752 & 0.178 & 0.075 &0 \\
 & PtS & 2.92 & 3.088 & 1.692 & 0.254 & 0.093 & 0 \\
\bottomrule
\end{tabular}
\end{table}
PtS has higher uncorrected NLL and ECE than APPNP, both with and without noise. Temperature scaling mitigates this without affecting accuracy.

\end{document}